\pdfoutput=1
\PassOptionsToPackage{table}{xcolor}
\documentclass{article} % For LaTeX2e
\usepackage[utf8]{inputenc}
\usepackage{iclr2027_conference,times}

\usepackage{amsmath,amsfonts,bm}

\def\eqref#1{equation~\ref{#1}}
\def\1{\bm{1}}

\DeclareMathAlphabet{\mathsfit}{\encodingdefault}{\sfdefault}{m}{sl}
\SetMathAlphabet{\mathsfit}{bold}{\encodingdefault}{\sfdefault}{bx}{n}

\newcommand{\E}{\mathbb{E}}

\newcommand{\R}{\mathbb{R}}

\usepackage{url}
\usepackage{enumitem}
\usepackage{amssymb,amsthm,mathtools}
\usepackage{booktabs}
\usepackage{xcolor}
\usepackage{tikz}
\usepackage{graphicx}
\usepackage{tabularx}
\usepackage{flafter}
\usepackage{float}
\definecolor{hmA}{HTML}{F3F8FE}\definecolor{hmB}{HTML}{DCEAFC}\definecolor{hmC}{HTML}{CDE2FB}
\definecolor{hmD}{HTML}{B7D3F6}\definecolor{hmE}{HTML}{9EC5F4}
\definecolor{oursrow}{HTML}{ECEFFB}
\newcolumntype{L}{>{\raggedright\arraybackslash}X}
\newcolumntype{Y}[1]{>{\centering\arraybackslash}p{#1}}
\usetikzlibrary{arrows.meta,positioning,fit,backgrounds}

\usepackage{hyperref}
\hypersetup{colorlinks=true,linkcolor=blue!55!black,citecolor=blue!55!black,urlcolor=blue!55!black}

\newtheorem{definition}{Definition}
\theoremstyle{definition}   % upright: Assumption 1 is long, italic hurts legibility there
\newtheorem{assumption}{Assumption}
\theoremstyle{plain}
\newtheorem{theorem}{Theorem}
\newtheorem{corollary}{Corollary}
\newtheorem{proposition}{Proposition}
\theoremstyle{remark}
\newtheorem{remark}{Remark}
\newcommand{\rot}[1]{\makebox[0pt][l]{\hspace{-0.9em}\rotatebox{55}{#1}}}
\newcommand{\cB}{\mathcal{B}}\newcommand{\cH}{\mathcal{H}}
\newcommand{\cR}{\mathcal{R}}\newcommand{\cS}{\mathcal{S}}\newcommand{\cX}{\mathcal{X}}
\newcommand{\cY}{\mathcal{Y}}\newcommand{\cZ}{\mathcal{Z}}
\newcommand{\Prb}{\mathbb{P}}\newcommand{\ind}{\mathbf{1}}
\newcommand{\W}{\mathsf{W}}\newcommand{\Ctrl}{\mathfrak{C}}
\newcommand{\cK}{\mathcal{K}}
\newcommand{\eout}{\epsilon_{\mathrm{outside}}}
\newcommand{\eoutn}[1]{\epsilon_{\mathrm{outside},#1}}

\newcommand{\efairn}[1]{\epsilon_{\mathrm{fair},#1}}
\newcommand{\heoutn}[1]{\widehat{\epsilon}_{\mathrm{outside},#1}}
\providecommand{\cA}{\mathcal{A}}\providecommand{\cC}{\mathcal{C}}\providecommand{\cD}{\mathcal{D}}\providecommand{\cF}{\mathcal{F}}
\providecommand{\cI}{\mathcal{I}}\providecommand{\cL}{\mathcal{L}}\providecommand{\cO}{\mathcal{O}}\providecommand{\cU}{\mathcal{U}}
\providecommand{\supp}{\operatorname{supp}}\providecommand{\push}{\mathbin{\#}}
\title{What Can a Leaderboard Certify? Compositional Controllability for Fair Evaluation and Training of Biomedical Literature-Review Agents}

\author{Zhaowei Han$^{*}$\quad Xiang Zhang$^{*}$\quad Lingxiao Guan$^{*}$\quad Danqi Hu \\
\textbf{Kai Liu\quad Kevin Chang\quad Jie Liu} \\
University of Michigan, Ann Arbor \\
{\small\texttt{\{rickyhan,xiangzhg,lxguan,danqihu,kailiua,kvchang,drjieliu\}@umich.edu}}
}

\iclrfinalcopy
\begin{document}

\maketitle
\lhead{Preprint}
{\renewcommand{\thefootnote}{\fnsymbol{footnote}}\footnotetext[1]{Equal contribution.}}
\begin{abstract}
Leaderboards rank long-horizon agents by their final outputs. Yet a higher score alone does not establish whether two systems are comparable or which stage accounts for the difference. Unequal evidence, inputs, or budgets can affect scores, and statistical corrections do not remove this mismatch. We introduce \emph{compositional controllability} to address these questions. A comparison window covers one stage, several stages, or the whole agent. Our central result bounds the gap between observed and controlled score differences using only nuisance outside the window. This yields an admissibility test applied before scores are inspected. Inadmissible comparisons are refused. For admissible pairs, an ordering is certified only when the score gap exceeds the combined sampling and nuisance radii; otherwise, it remains undecided. These decisions give each system a rank interval. We introduce \emph{BioLitBench}, a benchmark of $2{,}042$ biomedical articles represented as structured claim graphs. We use it to compare published pipelines, commercial agents, and our own agent under a shared reference list or a shared frozen corpus. Among seven published pipelines, a conventional statistical analysis declares a winner in $14$ of $21$ pairwise comparisons. Yet the top-ranked system alone received the target review's bibliography, giving it access to evidence the others were not supplied. To isolate pipeline performance, our test requires matched inputs and a fixed backbone model. It refuses $11$ of the $21$ comparisons, including every comparison involving the top-ranked system. Seven of the $14$ conventional conclusions fall within these refused pairs. The same comparison windows support stage-level training. We train \emph{SCRIBE} (\textbf{S}tage-\textbf{C}ertified \textbf{R}eview-writing agent with \textbf{I}nterface-\textbf{B}ounded \textbf{E}vidence) on Qwen3.8-27B using rewards measured at each stage's exit. Under matched evidence, SCRIBE achieves a certified rank interval of $[1,2]$, with certified advantages over all evaluated published pipelines and the evaluated Claude and OpenAI agents. Under same pool, SCRIBE matches the strongest published retriever and is certified above three published pipelines. The code is available at \url{https://github.com/shawnzhg/LitReview-SCRIBE}
\end{abstract}
\section{Introduction}

Leaderboards are a common way to evaluate long-horizon agents. A literature-review agent searches for papers, screens them, extracts evidence, develops claims, and writes a report. A benchmark then scores the final report. The resulting table tells us which output scored higher. But it leaves two central questions unanswered:
\begin{center}
    \emph{Are the systems comparable, and where does the difference come from?}
\end{center}

The first question concerns the evaluation design. Systems may receive different evidence, inputs, or budgets. Their scores then reflect both these conditions and the capabilities we want to compare. More tasks, clustering, and multiplicity correction do not remove this mismatch. The second question concerns attribution. Each stage changes what the next stage receives. A better report may come from better retrieval, better writing, or both. To compare writing stages, we must account for differences in their upstream inputs. A final score alone cannot separate these effects.

We introduce \emph{compositional controllability} to address both questions. We define a \emph{comparison window} by its entry and exit frontiers. A window can cover one stage, several stages, or the whole agent. A common interface makes these windows comparable across different systems. For each window and readout, we define a \emph{controlled contrast}: the score difference from a common entry under matched observation and protocol conditions. We then bound the gap between this contrast and the difference observed in an evaluation.

Our bound depends only on nuisance \emph{outside} the window: entry mismatch, differences in exit observation, and protocol imbalance. It places no restriction on what happens inside. This distinction matters. Retrieval differences are part of the capability being tested in a whole-agent comparison. In a comparison of writing stages, those same differences affect the inputs and must be accounted for.

The bound gives an admissibility test that is applied before scores are inspected. We \emph{refuse} a comparison if its outside nuisance cannot be bounded within a declared tolerance. For an admissible pair, we \emph{certify} an ordering only if the score gap exceeds the sum of the sampling and nuisance radii. Otherwise, the result is \emph{undecided}. The resulting ranking is partial. Each system receives a rank interval consistent with the certified relations. A leaderboard can therefore support an ordering only when external mismatch is controlled or bounded. Some positions may remain unresolved.

A systematic-review evaluation illustrates the problem. A production assistant reports abstract-screening sensitivity of $96.9\%$ on $108$ Cochrane reviews~\citep{elicit2025slr}. It describes this result as exceeding single-reviewer and approaching dual-reviewer human performance. The human estimates are $86.6\%$ and $97.5\%$. They come from a separate trial in which crowd screeners classified $2{,}000$ abstracts from two other reviews~\citep{gartlehner2020single}. These evaluations use different reviews, abstracts, and reference standards. No bound on this mismatch is stated, and no sampling radius is reported for the cross-study contrast. Two reviews also provide little basis for estimating between-review variation. Even a statistically significant gap would not resolve these differences in design. Yet the assistant's evaluation already tests individual stages against gold inputs. This is the controlled-window design we formalize. What our framework adds is an explicit admissibility test and an uncertainty radius.

We study these questions in biomedical literature review. Published human reviews provide content references against which generated reports can be scored claim by claim. Our benchmark, \emph{BioLitBench}, contains $2{,}042$ biomedical articles, most of them reviews. It represents them as graphs with $246{,}102$ claims, $832{,}487$ typed relations between claims, $103{,}242$ hard negatives, and section hierarchies. Reviews on the same topic form set-valued references that calibrate the comparison tolerances.

We use BioLitBench to build a \emph{certified leaderboard} of published pipelines, commercial agents, and our own agent. We evaluate two entry conditions. In the first, every system receives the target review's reference list. In the second, every system searches the same frozen corpus. The certificates concern agreement with human reference content. They do not establish how a system used its evidence. An evidence-ablation control demonstrates this limitation of the content readouts.

Certification changes which conclusions the evaluation supports. Among seven published pipelines, a cluster-robust, multiplicity-corrected analysis declares a winner in $14$ of $21$ pairwise comparisons. However, the top-ranked system alone received the target review's bibliography. Its score advantage therefore does not isolate pipeline quality from the benefit of supplied evidence. To compare pipelines under matched conditions, our admissibility test checks their inputs and requires a fixed backbone model. It refuses $11$ of the $21$ pairs, including every comparison involving the top-ranked system. Seven of the $14$ statistically significant orderings fall within these refused pairs. These score differences may be statistically reliable, but they do not establish which pipeline is better under matched conditions.

The same windows also support stage-level training. A sealed entry artifact removes entry mismatch. With observation and protocol conditions matched, an exit readout can evaluate the window's contribution. This allows us to reward individual stages rather than only complete trajectories. We build \emph{SCRIBE} (\textbf{S}tage-\textbf{C}ertified \textbf{R}eview-writing agent with \textbf{I}nterface-\textbf{B}ounded \textbf{E}vidence) around these windows. We train it with KV-Skill~\citep{kvskill2026} on a $27$B open model. Training uses the BioLitBench training split, which is disjoint from the $50$ evaluation tasks. Under matched evidence, SCRIBE has a certified rank interval of $[1,2]$. It has certified advantages over every evaluated published pipeline and the evaluated Claude and OpenAI agents. Its comparison with its own untrained harness remains undecided.

Our contributions are:
\begin{enumerate}[leftmargin=1.4em,itemsep=1pt]
    \item \textbf{Compositional controllability.}
    We define comparison windows over one stage, several stages, or the whole agent, and bound attribution error using only nuisance outside each window. This yields admissibility tests and certified partial rankings. We keep comparability separate from equivalence and substitution, and test the bound on the published pipelines themselves (Section~\ref{sec:method}; tests in Section~\ref{sec:theory-results} and Appendix~\ref{app:theory-checks}).

    \item \textbf{BioLitBench.}
    We introduce a graph-structured benchmark of $2{,}042$ biomedical articles with claims, typed relations, hard negatives, and section hierarchies. Same-topic reviews serve as set-valued references. A training split disjoint from the evaluation tasks lets the benchmark support both evaluation and training (Section~\ref{sec:benchmark}).

   \item \textbf{A certified leaderboard.}
We compare published pipelines, commercial agents, and SCRIBE under two entry conditions: a shared reference list or a shared frozen corpus. We report certified rank intervals and explicitly identify inadmissible comparisons. These certificates establish differences in agreement with human review content (Section~\ref{sec:leaderboard}).

    \item \textbf{SCRIBE.}
    We build a literature-review agent whose synthesis, planning, and writing stages each start from a sealed artifact. Each stage can then be rewarded at its own exit, rather than through the final report alone. We train SCRIBE with KV-Skill on a $27$B open model and also report SCRIBE-Luna, the same harness untrained on a frontier model. Under matched evidence, SCRIBE achieves a certified rank interval of $[1,2]$, above every evaluated published pipeline and the Claude and OpenAI agents.
\end{enumerate}
\section{Related work}
\label{sec:related}

Our machinery comes from probabilistic bisimulation
\citep{givan2003equivalence,ferns2004metrics,ferns2011bisimulation,gelada2019deepmdp} and causal
abstraction \citep{rubenstein2017causal,rischel2021compositional,felekis2024causal}, which ask when
one process may stand in for another. We ask the weaker question of when a contrast observed across
two \emph{separate} pipelines identifies the effect of replacing one stage, and
Theorem~\ref{thm:observable-attribution} answers it with a bound in the manner of partial
identification \citep{manski2003partial}. Stage-level credit assignment
\citep{lightman2024verify,wu2026optimas,zhang2025which} works inside one system, whereas we credit a
difference between two systems to the stage in which they differ.

Survey-writing and deep-research agents
\citep{DBLP:conf/nips/WangGYZZ0ZD0W0Z24,DBLP:conf/acl/YanFYX00Z25,DBLP:journals/corr/abs-2510-07733,DBLP:conf/ijcnlp/ChenYSLBGP25,DBLP:journals/corr/abs-2504-05732,DBLP:conf/aaai/GoLSTRC26,shao2026dr}
are benchmarked on their final reports \citep{su2026surge,du2026deepresearch}. \citet{zhu2026llm}
argue that what such a score supports depends on the declared boundary; a comparison window is such a
boundary, and the nuisance radius bounds what lies outside it. We adopt rigorous measurement practice
\citep{miller2024errorbars,bean2025measuring,kapoor2025agents,singh2025leaderboard}, but no test on the
scores reveals that a pair was never entitled to be compared. Confidence sets for ranks
\citep{klein2020joint,mogstad2024inference} give simultaneous inference on an ordering; our rank
intervals are of this kind, and what we add is the admissibility test that precedes them. Other work inspects a single run, through its logs
\citep{kirgis2026log} or with an agent that grades its trajectory \citep{zhuge2025agentjudge}, rather
than licensing a comparison between two systems. ADRA-Bank \citep{guo2025adra} sets stage inputs only
for backbone models, whereas published pipelines expose no entry but the task.

\section{Method: Compositional Controllability}
\label{sec:method}

Compositional controllability determines which comparisons an evaluation can support. A \emph{comparison window} specifies the stretch of execution being compared, a \emph{controlled contrast} defines the difference of interest, an \emph{attribution bound} limits the gap between the observed and controlled contrasts, and a \emph{certificate} combines this bound with sampling uncertainty. The same bound applies to one stage, several stages, or the whole agent.

An agent system $\cS$ induces a distribution over execution histories $h\in\cH$, since model sampling, tool calls and branching decisions all introduce randomness. An evaluation draws tasks from a distribution $\rho$, runs each system and scores its outputs, so all bounds below are relative to $\rho$ and to the states reached during evaluation. We measure differences between distributions using the Wasserstein distance $\W_d$ for a bounded metric $d$. A Markov kernel $Q$ is \emph{$L$-Lipschitz} if $\W_d(\nu Q,\nu' Q)\le L\,\W_d(\nu,\nu')$ for the input distributions reached by the benchmark, so that $L$ bounds how much $Q$ can amplify a difference in its inputs. Appendix~\ref{app:setting} gives the formal setting.

\subsection{A motivating example}
\label{sec:compare}

Consider two writing modules, $A$ and $B$, each receiving an evidence bundle with a fraction $e\in[0,1]$ of the required papers. Their reports score $f_A(e)=0.8e$ and $f_B(e)=0.5e$, so $A$ scores higher at any common coverage $e>0$, but the modules belong to pipelines with different retrieval stages.

\begin{center}
\small
\begin{tabular}{@{}lccc@{}}
\toprule
Module & Retrieved coverage & Pipeline score & Score at $e=0.5$ \\
\midrule
$A$ & $e_A=0.2$ & $0.16$ & $0.40$ \\
$B$ & $e_B=0.4$ & $0.20$ & $0.25$ \\
\midrule
$A-B$ & --- & $-0.04$ & $+0.15$ \\
\bottomrule
\end{tabular}
\end{center}

The pipeline scores favor $B$ by $0.04$, but $B$ receives more evidence. Given the same bundle with $e=0.5$, $A$ leads by $0.15$, so the pipeline comparison reverses the ordering of the writing modules, and because this mismatch is systematic, more tasks do not remove it.

\subsection{Comparison windows and readouts}

\paragraph{Windows.}
Different systems need not share code or internal representations, so we compare them through a \emph{semantic interface} that maps their states into a common space; $\cZ_M$ denotes its exit part for a window $M$. For a review agent, it records the evidence, the artifacts built from it, and the remaining budget.

A \emph{comparison window} $M=[F^{\mathrm{in}},F^{\mathrm{out}}]$ runs from an entry frontier to an exit frontier, each an antichain of histories that an execution visits at most once (Appendix~\ref{app:windows}). For system $X$, all computation inside the window is represented by a \emph{macro-kernel} $Q_X^M:F^{\mathrm{in}}\rightsquigarrow\cX_M\times F^{\mathrm{out}}$, where $\cX_M$ records the labeled trace, resources used and failures inside the window, and $F^{\mathrm{out}}$ holds the exit artifact.

A review agent works through a sequence of stages, from retrieval to writing. A \emph{stage window} covers one of these stages and begins at its input artifact, while a \emph{system window} covers the whole agent, from the task input to the final report. In our example, the window covers writing and begins at the evidence bundle. What lies inside the window may differ between the two systems, so a system window, which contains the backbone model, compares complete systems even across backbones.

\paragraph{Readouts.}
A benchmark evaluates a window through a readout $n$ with score function $s_n:\cY_n\to[0,1]$. The \emph{observation channel} $C_{X,n}:\cZ_M\rightsquigarrow\cY_n$ maps the window's exit state to the object being scored, including any downstream stages needed to obtain that object. When a retrieval window is scored on the final report, for instance, every later stage belongs to its channel.

Let $\mu_X$ be system $X$'s \emph{entry law}, the distribution of its inputs to the window. Its expected observed score is
\begin{equation}
\theta_{X,n}^{\mathrm{obs}}
=
V_n\big(\mu_X Q_X^M C_{X,n}\big),
\qquad
V_n(\lambda)
=
\int_{\cY_n}s_n(y)\,\lambda(dy).
\label{eq:observed-value}
\end{equation}

\subsection{Contrasts and the attribution bound}

A comparison between two pipelines may change both the window and its surroundings, since the systems may enter the window with different inputs or be scored through different channels. To isolate the window, fix a common entry law $\mu$ and a common channel $C_n$. The \emph{controlled contrast} and the \emph{observed contrast} are
\begin{equation}
\Delta_n^{\star}(\mu,C_n)=V_n(\mu Q_A^M C_n)-V_n(\mu Q_B^M C_n),
\qquad
\Delta_n^{\mathrm{obs}}=\theta_{A,n}^{\mathrm{obs}}-\theta_{B,n}^{\mathrm{obs}}.
\label{eq:controlled-contrast}
\end{equation}

The controlled contrast changes only the window, which makes it the quantity we want to estimate, but it cannot be measured when existing pipelines cannot be run from the same entry law or through the same channel. The observed contrast uses each pipeline's actual inputs and channel, and the two can have opposite signs, as in our example, where $\Delta^\star=+0.15$ but $\Delta^{\mathrm{obs}}=-0.04$.

We bound the gap between the observed and controlled contrasts under two conditions (Assumption~\ref{ass:main} in Appendix~\ref{app:windows}). First, all effects of the entry on the score must pass through the window. Second, differences outside the window must have bounded effects on the score.

In the bound, the entry mismatch $\epsilon_{\mathrm{in},X}$ is the Wasserstein distance between $\mu_X$ and $\mu$, and the channel mismatch $\epsilon_{\mathrm{cont},X,n}$ (for the continuation that forms the channel) is the largest distance between the outputs of $C_{X,n}$ and $C_n$ from the same reachable exit. The Lipschitz constants $L_{M_X}$ and $L_{C_n}$ bound how much the window and the common channel amplify input differences, and the slack $\xi_{M_X}$ allows for discontinuities such as truncation. Finally, $L_{s_n}$ is the score's Lipschitz constant, and $\epsilon_{\mathrm{protocol},n}$ bounds any remaining protocol imbalance on the score scale.

\begin{theorem}[Observable attribution bound]
\label{thm:observable-attribution}
Under Assumption~\ref{ass:main} and the boundary factorization of Appendix~\ref{app:windows}, $\big|\Delta_n^{\mathrm{obs}}-\Delta_n^{\star}(\mu,C_n)\big|\le\eoutn{n}$, where
\begin{equation}
\begin{aligned}
\eoutn{n}
:={}&
L_{s_n}\Big[
L_{C_n}\big(
L_{M_A}\epsilon_{\mathrm{in},A}
+
L_{M_B}\epsilon_{\mathrm{in},B}
+
\xi_{M_A}
+
\xi_{M_B}
\big)\\
&\qquad
+
\epsilon_{\mathrm{cont},A,n}
+
\epsilon_{\mathrm{cont},B,n}
\Big]
+
\epsilon_{\mathrm{protocol},n},
\end{aligned}
\label{eq:outside-bound}
\end{equation}
and the slack $\xi_{M_X}$ is charged only when $\mu_X\neq\mu$; it is $0$ when $\mu_X=\mu$.
\end{theorem}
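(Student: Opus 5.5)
The plan is a telescoping triangle-inequality argument. It bounds, for each system $X\in\{A,B\}$ separately, the gap between its observed value $\theta^{\mathrm{obs}}_{X,n}=V_n(\mu_X Q_X^M C_{X,n})$ and its controlled value $V_n(\mu Q_X^M C_n)$, and then sums the two per-system bounds. Since $\Delta_n^{\mathrm{obs}}-\Delta_n^\star$ is the difference of the two per-system gaps, it suffices to show
\[
\bigl|V_n(\mu_X Q_X^M C_{X,n})-V_n(\mu Q_X^M C_n)\bigr|\le L_{s_n}\bigl[L_{C_n}(L_{M_X}\epsilon_{\mathrm{in},X}+\xi_{M_X})+\epsilon_{\mathrm{cont},X,n}\bigr]+\epsilon_{\mathrm{protocol},n}^{(X)},
\]
with the protocol terms of the two systems combining into $\epsilon_{\mathrm{protocol},n}$. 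The first assumption, that every effect of the entry passes through the window, is what the boundary factorization of Appendix~\ref{app:windows} provides. It lets me write each observed law exactly as the composition $\mu_X Q_X^M C_{X,n}$, with no side path from entry to score, so the decomposition below is legitimate.

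For each $X$, I insert the intermediate law $\mu_X Q_X^M C_n$, which uses the actual entry but the common channel, and split the per-system gap into a channel term and an entry term.

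\textbf{Channel term.} This is $|V_n(\mu_X Q_X^M C_{X,n})-V_n(\mu_X Q_X^M C_n)|$. Since $s_n$ is $L_{s_n}$-Lipschitz and bounded, Kantorovich--Rubinstein duality gives $|V_n(\lambda)-V_n(\lambda')|\le L_{s_n}\W_d(\lambda,\lambda')$. By the joint convexity of $\W_d$ under mixtures (gluing the pointwise couplings $C_{X,n}(z,\cdot)$, $C_n(z,\cdot)$ over the same exit law), this is at most $L_{s_n}\sup_z \W_d(C_{X,n}(z,\cdot),C_n(z,\cdot))=L_{s_n}\epsilon_{\mathrm{cont},X,n}$. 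The supremum runs over reachable exits, which matches the definition of $\epsilon_{\mathrm{cont}}$.

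\textbf{Entry term.} This is $|V_n(\mu_X Q_X^M C_n)-V_n(\mu Q_X^M C_n)|$. Again by duality it is at most $L_{s_n}\W_d(\mu_X Q_X^M C_n,\mu Q_X^M C_n)$, which is at most $L_{s_n}L_{C_n}\W_d(\mu_X Q_X^M,\mu Q_X^M)$ by the Lipschitz property of $C_n$. The window kernel is only approximately Lipschitz, $\W_d(\nu Q_X^M,\nu' Q_X^M)\le L_{M_X}\W_d(\nu,\nu')+\xi_{M_X}$, with the slack absorbing truncation discontinuities. This yields $L_{s_n}L_{C_n}(L_{M_X}\epsilon_{\mathrm{in},X}+\xi_{M_X})$. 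When $\mu_X=\mu$ the two laws coincide exactly, so the entry term is $0$ and no slack is charged; this gives the final clause of the theorem.

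\textbf{Protocol term.} Any residual protocol imbalance, such as differing judge or scoring conditions not captured by the channels, is by assumption an additive discrepancy on the score scale. It therefore adds $\epsilon_{\mathrm{protocol},n}$ after the two systems are combined.

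The main obstacle is bookkeeping of the spaces rather than any inequality. The window kernel outputs $\cX_M\times F^{\mathrm{out}}$, while the channel acts on the interface exit $\cZ_M$. One must check that the boundary factorization lets $C_n$ be applied to a pushforward through the semantic interface, and that the metric used for $L_{M_X}$ on the output side matches the one used for $L_{C_n}$. I would handle this by defining all Wasserstein distances on the interface images and invoking the appendix factorization to justify treating $Q_X^M$ followed by the interface map as a single kernel into $\cZ_M$. Beyond that, care is needed that the Lipschitz constants hold only on reachable input distributions. The intermediate law $\mu_X Q_X^M$ is reached by the benchmark, and I would take $\mu Q_X^M$ to be reachable by assumption.
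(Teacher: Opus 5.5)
Your proposal is correct and follows the paper's proof essentially step for step. Both arguments split each system's gap through the intermediate law $\mu_X Q_X^M C_n$ and use Kantorovich--Rubinstein duality to pass to transport distances. Both couple the two channels pointwise over the common exit law for the channel term, and both apply the Lipschitz-plus-slack bounds of $C_n$ and $Q_X^M$ for the entry term, which vanishes with no slack when $\mu_X=\mu$. The only cosmetic difference is that the paper adds $\epsilon_{\mathrm{protocol},n}$ once at the end rather than splitting it per system. Your concern that $\mu Q_X^M$ be a reachable law is exactly what Assumption~\ref{ass:main}(v) stipulates.
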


Appendix~\ref{app:windows} gives the proof. We call $\eoutn{n}$ the \emph{nuisance radius}. It does not require the two windows to behave alike: each enters only through its own Lipschitz constant and slack.

In our example, each report is scored as soon as it is written, so the window is \emph{directly observable} (Definition~\ref{def:direct-observability}), which gives $L_{C_n}=L_{s_n}=1$ and no channel or protocol mismatch. The linear modules give $\xi_{M_A}=\xi_{M_B}=0$, $L_{M_A}=0.8$ and $L_{M_B}=0.5$, and relative to the common bundle $e=0.5$ the entry mismatches are $\epsilon_{\mathrm{in},A}=0.3$ and $\epsilon_{\mathrm{in},B}=0.1$. The bound $\eout=0.8(0.3)+0.5(0.1)=0.29$ therefore covers the discrepancy $|-0.04-0.15|=0.19$.

\subsection{Certification and rank intervals}

\paragraph{Admissibility.}
We first decide whether the evaluation supports the intended comparison, using the declared conditions and the nuisance radius before inspecting the score gap.

\begin{definition}[Readout-relative fair comparability]
\label{def:fair-comparability}
Fix a tolerance $\efairn{n}>0$ before observing any ranking. Windows $M_A$ and $M_B$ are
\emph{$\efairn{n}$-fairly comparable under readout $n$}
if Assumption~\ref{ass:main} holds, the controlled estimand in
\eqref{eq:controlled-contrast} is declared, and
$\eoutn{n}\le\efairn{n}$.
\end{definition}

We call such a pair \emph{admissible}; otherwise the comparison is \emph{refused} under the current design. Fair comparability concerns only the conditions surrounding a window. The stronger claims that two windows behave alike (\emph{equivalence}) or can replace one another with a bounded change in the final outcome (\emph{substitution}) are defined in Appendices~\ref{app:geometry} and~\ref{app:composition}.

\paragraph{Certification.}
For an admissible pair, let $\widehat\Delta_n^{\mathrm{obs}}$ be the estimated contrast and $q_n(\delta')$ its sampling radius at confidence $1-\delta'$. We certify the ordering given by the sign of $\widehat\Delta_n^{\mathrm{obs}}$ only when
\begin{equation}
\big|\widehat\Delta_n^{\mathrm{obs}}\big|
>
q_n(\delta')+\eoutn{n}.
\label{eq:decision}
\end{equation}
If this condition fails, the pair remains \emph{undecided}.

In practice, we replace $\eoutn{n}$ in both decisions with its upper confidence bound, $\heoutn{n}+r_n(\delta)$ (Definition~\ref{def:empirical-fairness}). When entries, channels and protocols are exactly matched, there is no outside mismatch or slack, so the nuisance radius is zero and certification rests on the sampling radius alone.

In our example, the gap of $0.04$ lies within the nuisance radius of $0.29$, so more tasks alone cannot certify the ordering. Once both modules receive the same bundle, with the channel and protocol unchanged, $\eout=0$ and $\Delta^{\mathrm{obs}}=\Delta^\star=0.15$.

Certified relations need not form a total order, so we report each system's \emph{rank interval}, the set of positions it can take in any total order consistent with them (Appendix~\ref{app:estimation}). A rank is \emph{pinned} when its interval contains a single position.

\section{Human Reviews as Structured Ground Truth}
\label{sec:benchmark}

BioLitBench provides structured human references for evaluating generated reviews claim by claim.

\paragraph{Construction and quality.}
We parse $2{,}063$ open-access PubMed Central articles from JATS XML, mostly reviews. Qwen3.5-9B~\citep{DBLP:journals/corr/abs-2505-09388} extracts sentence-level claims with their citations and section assignments. It labels embedding-selected claim pairs with eight relation types: parallel, contrast, elaboration, evidence, mechanism, context, synthesis, and condition. Topically similar pairs from different top-level sections labeled unrelated become hard negatives. We also parse section hierarchies and resolve cited papers to PubMed identifiers and abstracts.

This process yields $2{,}042$ graphs with section, claim, and paper nodes (Table~\ref{tab:biolitbench} in Appendix~\ref{app:validation}). The mean per-article reference resolution rate is $87.4\%$. A PubMedBERT-based NLI model~\citep{DBLP:journals/corr/abs-2007-15779}, fine-tuned on MNLI~\citep{DBLP:conf/naacl/WilliamsNB18} and MedNLI, classifies $87.6\%$ of claims as entailed by at least one cited abstract.

\paragraph{References and splits.}
Each evaluation task has $4$--$9$ same-topic peer reviews to capture variation in acceptable content and organization. The target review supplies the upper normalization anchor; its peers define the reference band used to calibrate tolerances (Appendix~\ref{app:reference}). We split the articles into $1{,}842$ training and $200$ held-out examples. We select $50$ evaluation tasks by clustering held-out topic embeddings and sampling proportionally across clusters. Each task restricts evidence to papers published before its cutoff.

\paragraph{Scoring.}
System outputs and human peers pass through the same observation channel. We match sentences to reference claims using embedding similarity. The threshold is the larger of $0.70$ and the $95$th percentile of off-topic similarities. Citations are resolved in each output's format, and headings match reference sections through fuzzy titles and shared claims. A relation is reproduced when both claims match and their sentence similarity exceeds the $10$th percentile for linked reference claims.

We screen $52$ candidate readouts before pairwise comparisons. The screen checks discrimination, variation, saturation, channel artifacts, and whether a readout measures quality rather than compliance. The $34$ admitted readouts form six axes: retrieval ($6$), synthesis ($6$), reasoning ($13$), planning ($2$), writing ($2$), and form ($5$). Quality scores are normalized between the peers' $10$th percentile and the target review's score. Style scores measure distance from the peer band. The composite weights axes equally.

\paragraph{Entry conditions.}
Under \emph{fixed input}, every system receives the task's reference list and is evaluated on five post-retrieval axes. Under \emph{same pool}, every system searches a frozen corpus of $26.6$M papers, adding the retrieval axis. We report these conditions separately.

\section{Stage-Level Training with Certified Rewards}
\label{sec:scribe}

\emph{SCRIBE} (\textbf{S}tage-\textbf{C}ertified \textbf{R}eview-writing agent with \textbf{I}nterface-\textbf{B}ounded \textbf{E}vidence) combines controlled stage-level rewards with KV-Skill training on a frozen open model.

\paragraph{Controlled stages and rewards.}
After retrieval, synthesis builds a claim graph, planning assigns claims to an outline, and writing drafts the sections. Each stage has a sealed, hashed input and a directly scored output. Receipts link consecutive artifacts. Two versions of a stage entered from the same sealed input and scored at its exit with the same readout and protocol are exactly matched, so $\eout=0$ and $\Delta^{\mathrm{obs}}=\Delta^\star$: the exit score can serve as that stage's reward.

Rewards use BioLitBench readouts at each stage's exit. Training examples and reference graphs come only from the training split; evaluation tasks, their references, and their peers are excluded from those examples. However, reward weights are calibrated on leaderboard tasks using the initial agent's gap to the strongest published pipeline for each readout. Planning is trained; synthesis and writing are scored but keep their initial carriers.

\paragraph{Training and variants.}
KV-Skill~\citep{kvskill2026} provides external operators accessed through a lightweight interface, without adding prompt tokens or changing backbone weights. SCRIBE uses Qwen3.8-27B~\citep{DBLP:journals/corr/abs-2505-09388}. For planning, we register a KV-Skill carrier and optimize it with GRPO on $192$ training tasks. We also report \emph{SCRIBE (untrained)}, the harness with its initial carriers.

\emph{SCRIBE-Luna} runs the same harness on gpt-5.6-luna without stage-level training. This backbone also serves six of the seven published pipelines; DR-Tulu uses its own model. When retrieval is required, SCRIBE uses the shared pool service. Its search budget is capped at the largest baseline budget for that task (Appendix~\ref{app:validation}).
\section{Results}
\label{sec:results}

We evaluate on the $50$ held-out tasks of BioLitBench, which fall into $23$ of the $48$ subfields;
the subfields serve as clusters. The primary observation point truncates each report to the task's
target length (\emph{matched length}); full length is secondary. All analysis constants are recorded
with input hashes.

\subsection{The certified leaderboard}
\label{sec:leaderboard}

\paragraph{Protocol.}
The leaderboard compares complete systems; the backbone is part of each system and is reported on
every row. Within each entry condition all systems share one entry: the reference list under fixed
input, or the frozen corpus under same pool. We call a published pipeline's fixed-input run
\emph{reference-fed} and its same-pool run \emph{self-retrieving}. The nuisance radius is therefore
zero, so certification here is the sampling rule; the bound's role is to show that every pair may be
compared, and Section~\ref{sec:framework-results} shows it refusing pairs that may not. The sampling
radius is a cluster-robust $t$ half-width over the $23$ clusters, Bonferroni-corrected over all pairs
of the table. A pair needs at least $10$ common tasks, and a task either system failed is dropped from
that pair. The two conditions have different entry laws and are never compared with each other.
Figure~\ref{fig:leaderboard} shows the certified rank intervals; Table~\ref{tab:leaderboard} gives the
fixed-input scores (same pool: Table~\ref{tab:leaderboard-pool}).

\begin{figure}[!htb]
\centering
\includegraphics[width=\textwidth]{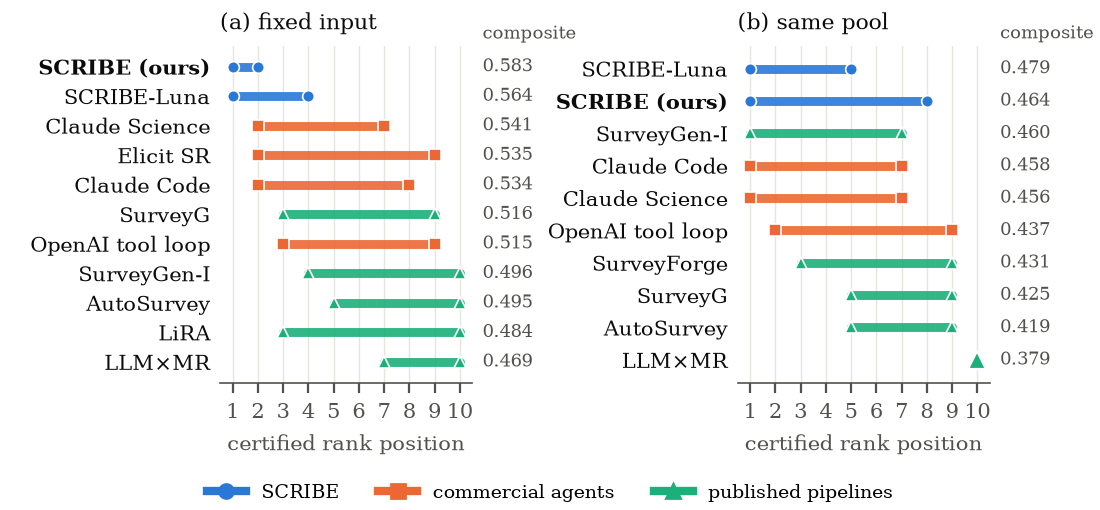}
\caption{\textbf{Certified rank intervals.} Each bar spans all positions a system can occupy in an
ordering consistent with the certified pairwise relations; a single marker denotes a pinned
position. Panel (a) is fixed input and panel (b) is same pool; positions are ranked within a panel
and are not comparable across panels. Systems are sorted by composite score, shown on the right. Most
systems admit an interval of possible ranks rather than a unique rank.}
\label{fig:leaderboard}
\end{figure}

\paragraph{SCRIBE.}
Under fixed input, SCRIBE has a certified rank interval of $[1,2]$ and the highest composite
($0.606$). It is certified above every other reported system, including SCRIBE-Luna, the OpenAI tool
loop, Claude Code, Claude Science and Elicit, except its own untrained harness ($0.583$), from which it
is not separated. The gain over the untrained harness is largest on planning, the trained stage
($0.319$ against $0.276$).

Under same pool, SCRIBE has the highest composite ($0.487$) and a certified interval of $[1,7]$. It is
certified above the OpenAI tool loop, AutoSurvey, SurveyG and LLM$\times$MapReduce (LLM$\times$MR), and
is not separated from SCRIBE-Luna, SurveyGen-I or the Claude agents. Its retrieval score of $0.335$
matches SurveyGen-I's $0.336$.

\begin{table}[!htb]
\centering\footnotesize
\setlength{\tabcolsep}{3pt}
\caption{\textbf{Capability scores} under fixed input at the primary observation point. Rank denotes
the certified rank interval shown in Figure~\ref{fig:leaderboard}. Within each column, the highest
value is bold and the second highest is underlined. Our systems are shaded.}
\label{tab:leaderboard}

% BEGIN tab:leaderboard (a)
% generated by ccbench/quick/make_results_figs.py -- do not edit by hand
\setlength{\tabcolsep}{2.0pt}
\begin{tabular}{@{}p{58pt}p{76pt}Y{41pt}Y{24pt}Y{33.2pt}Y{33.2pt}Y{33.2pt}Y{33.2pt}Y{33.2pt}@{}}
\toprule
& & & & \multicolumn{5}{c}{Capability (fraction of human level)}\\
\cmidrule(l){5-9}
Backbone & System & Composite & Rank & \rot{Synthesis} & \rot{Reasoning} & \rot{Planning} & \rot{Writing} & \rot{Form} \\
\midrule
\multicolumn{9}{c}{\emph{Fixed input: every system receives the review's reference list}}\\
\midrule
\rowcolor{oursrow} \cellcolor{white}Qwen3.8-27B & SCRIBE & \textbf{0.606} & $1$--$2$ & 0.689 & 0.594 & \textbf{0.319} & \textbf{0.613} & \textbf{0.814} \\
\rowcolor{oursrow}  & SCRIBE (untrained) & \underline{0.583} & $1$--$3$ & 0.686 & 0.601 & \underline{0.276} & 0.565 & \underline{0.786} \\
\arrayrulecolor{black!25}\midrule\arrayrulecolor{black}
\rowcolor{oursrow} \cellcolor{white}gpt-5.6-luna & SCRIBE-Luna & 0.564 & $2$--$6$ & \underline{0.700} & \textbf{0.611} & 0.177 & 0.568 & 0.767 \\
 & SurveyG & 0.516 & $4$--$12$ & 0.568 & 0.541 & 0.272 & 0.509 & 0.689 \\
 & OpenAI tool loop & 0.515 & $4$--$12$ & \textbf{0.741} & 0.552 & 0.136 & 0.487 & 0.660 \\
 & SurveyGen-I & 0.496 & $6$--$13$ & 0.591 & 0.510 & 0.182 & 0.449 & 0.746 \\
 & AutoSurvey & 0.495 & $6$--$13$ & 0.638 & 0.533 & 0.135 & 0.483 & 0.687 \\
 & LiRA & 0.484 & $4$--$13$ & 0.539 & 0.495 & 0.254 & 0.442 & 0.716 \\
 & LLM$\times$MR & 0.469 & $10$--$13$ & 0.537 & 0.515 & 0.182 & 0.446 & 0.666 \\
\arrayrulecolor{black!25}\midrule\arrayrulecolor{black}
claude-sonnet-5 & Claude Science & 0.541 & $3$--$10$ & \underline{0.700} & \underline{0.603} & 0.160 & 0.507 & 0.736 \\
 & Claude Code & 0.534 & $3$--$12$ & 0.650 & 0.547 & 0.185 & \underline{0.587} & 0.703 \\
\arrayrulecolor{black!25}\midrule\arrayrulecolor{black}
proprietary & Elicit SR$^{\S}$ & 0.535 & $3$--$10$ & 0.668 & 0.526 & 0.185 & 0.570 & 0.723 \\
\bottomrule
\end{tabular}
% END tab:leaderboard (a)
\vspace{3pt}
\parbox{\textwidth}{\scriptsize Published pipelines use their reference-fed configurations.
$^{\S}$Elicit Systematic Review; it reads full text and cites at most $80$ papers.
Ranks are computed over all systems evaluated, including one commercial system that is not
reported. $n=50$ tasks in $23$ subfield clusters.}
\end{table}

\paragraph{Other systems.}
Under fixed input, Claude Science and Elicit are certified above SurveyGen-I, AutoSurvey and
LLM$\times$MR, and Claude Code above LLM$\times$MR. The two share a backbone, so their difference comes
from the product harness. The published pipelines are separated from the commercial agents, but few from one another.

\paragraph{What the intervals show.}
Most positions in both tables remain intervals, and none is pinned; a conventional leaderboard would
print a distinct rank for every system. The certificate is stable: on a null with no true difference it falsely
certifies near the nominal rate, and leaving out any one subfield reverses no relation
(Appendix~\ref{app:tc-calibration}). A certified relation measures agreement with the human review's
content, not use of the supplied evidence (Appendix~\ref{app:readout-validity}). Under fixed input,
SCRIBE stays first by point estimate at full length (Appendix~\ref{app:leaderboard}). Under same pool,
SurveyGen-I retrieved the target review's own references on $33$ of $50$ tasks; its margins are
unchanged on the other $17$ (Appendix~\ref{app:validation}).

\subsection{Certification versus a conventional analysis}
\label{sec:framework-results}

We apply a conventional analysis and our procedure to the seven published pipelines, with the
backbone held fixed. Table~\ref{tab:rules} adds one requirement at a time.

\begin{table}[!htb]
\centering\small
\caption{\textbf{What each analysis concludes} about the $21$ pairs among the seven published
pipelines at the primary observation point. R4 and R5 are counted over the $10$ admissible pairs.}
\label{tab:rules}
\begin{tabular}{@{}lcl@{}}
\toprule
Rule & Pairs decided & What it adds \\
\midrule
R0\quad sorted table & $21$ / $21$ & any difference counts \\
R1\quad paired $t$-test & $17$ / $21$ & sampling uncertainty \\
R2\quad cluster-robust $t$ & $16$ / $21$ & task clustering \\
R3\quad $+$ Bonferroni & $\mathbf{14}$ / $21$ & multiplicity; a careful conventional analysis \\
\rowcolor{hmB} R4\quad $+$ admissibility test & $7$ / $10$ & $11$ pairs refused before any score is read \\
\rowcolor{hmB} R5\quad $+$ nuisance radius & $\mathbf{7}$ / $10$ & certified; the radius is zero, entries are shared \\
\bottomrule
\end{tabular}
\end{table}

\paragraph{The conventional analysis.}
R3 accounts for clustering and multiplicity, and its false-positive rate is near nominal. Yet it ranks
LiRA first, and LiRA alone is given the target review's bibliography. The system it ranks last,
DR-Tulu, runs its own model. Significance alone does not show that a comparison isolates the
component of interest.

\paragraph{The admissibility test.}
Our test refuses $11$ of the $21$ pairs before any score is read: the $6$ pairs with DR-Tulu, whose
backbone differs, and the $5$ that pair LiRA's entry with a self-retrieving one. These refusals follow
from Eq.~\ref{eq:outside-bound}. For the LiRA pairs, the measured entry mismatch ($0.94$--$0.98$),
$L_M=0.890$ and the score constants $L_{s_n}$ (median $0.83$; $L_C=1$) give a radius of $0.70$--$0.73$
per readout, against a pre-registered tolerance with median $0.21$ (Appendix~\ref{app:tc-refusals}). A backbone change lies outside a
scaffold window, so the DR-Tulu pairs have no finite radius. Seven of the $14$ conventional orderings
fall among them.

\paragraph{What the admissible pairs show.}
With entries matched, the pipelines separate. Among the five self-retrieving pipelines, $7$ of $10$
pairs certify, with SurveyGen-I pinned first and LLM$\times$MR last. Retrieval separates them most: it
certifies $11$ of $21$ same-pool pairs in the capability analysis, and writing none
(Appendix~\ref{app:campaign}). Among the reference-fed configurations only $1$ of $10$ pairs
certifies. Every undecided admissible pair is limited by sampling, not nuisance.

\paragraph{One declared convention changes the winner.}
At matched length, AutoSurvey is third among the reference-fed configurations and nothing is pinned
first. At full length, $6$ of $10$ pairs certify and AutoSurvey is pinned first
(Table~\ref{tab:certified}); its reports have a median of about $250{,}000$ words, against
$10{,}000$--$47{,}000$ for the others. Both views are valid, but they answer different questions, so
the convention must be declared before the comparison is read.

\subsection{The theory on real systems}
\label{sec:theory-results}

Table~\ref{tab:theory-checks} in Appendix~\ref{app:theory-checks} summarizes the tests of the
theory. The attribution bound is tested on the published systems in two designs and holds in both, and
it decides the five LiRA refusals above. It is informative on the writing axis, where no cell is
vacuous, and looser on the others; non-expansiveness depends on the declared length convention
(Appendix~\ref{app:tc-w4}). Published pipelines cannot be re-entered at an intermediate stage, so the
multi-stage composition bound is checked on synthetic chains.

\section{Conclusion}
\label{sec:conclusion}

Compositional controllability decides, before any score is read, whether two agents may be compared at
a chosen window, and certifies an ordering only when the gap exceeds the radius. On BioLitBench, a
careful conventional analysis orders $14$ of the $21$ published-pipeline pairs; our test refuses $11$
pairs, and $7$ of the $14$ orderings fall among them. The same windows give stage-level rewards:
SCRIBE, trained with KV-Skill on Qwen3.8-27B, reaches a certified rank interval of $[1,2]$ under fixed
input, above every published pipeline and every commercial agent we report.

The results have limits. A certified relation measures agreement with the human review's content,
not use of the supplied evidence, and compares systems up to an exit, not modules in isolation. The
length convention can change the winner, so it must be fixed in advance.

\subsection*{AI use statement}

AI assistants were used to support manuscript organization and language revision, and to implement
and debug code. Language models that are part of the research itself, for building BioLitBench and as
the evaluated agents, are described in Sections~\ref{sec:benchmark}--\ref{sec:results}. The authors
reviewed and modified all generated suggestions, verified the experimental results and citations, and
take full responsibility for the final manuscript and released artifacts.

\subsection*{Ethics statement}

This work uses published biomedical literature. It involves no human subjects, patient data or
private information. BioLitBench is built from open-access articles in PubMed Central, whose licences
differ: $1{,}119$ of the $2{,}063$ source articles are CC BY or CC0 and can be redistributed, while
the rest, including NIH author manuscripts, allow text mining but not redistribution. We will
release the redistributable subset together with scripts that rebuild the full benchmark from PubMed
Central. The benchmark measures agreement with a human review's content, not the correctness or
clinical safety of a report (Appendix~\ref{app:readout-validity}). A high certified rank is therefore
not evidence that a generated review is fit for clinical or policy use, and generated reviews should
be checked by experts before any such use. Commercial systems were accessed through their public
interfaces under their terms of service and were evaluated under the same declared conditions as
every other system. We report every refusal and undecided pair rather than a total order, so that
the leaderboard is not read as stronger evidence than it provides.

\subsection*{Reproducibility statement}

Code, the frozen leaderboard inputs and the analysis scripts are available at the
repository linked in the abstract. Section~\ref{sec:method} and Appendix~\ref{app:theory} state every
assumption and give complete proofs. Section~\ref{sec:benchmark} describes how BioLitBench is built
and scored, including the readout admission screen. Section~\ref{sec:scribe} and
Appendix~\ref{app:validation} describe SCRIBE, its backbone, carrier size, training and retrieval
harness. Every published pipeline ran behind a recording proxy that logs each model call and a network
guard that allows only the local corpus service; the per-task logs, integrity hashes of each
pipeline's code and the cost ledgers are part of the release. The decision constants are recorded
with input hashes (Appendix~\ref{app:validation}), so a reader can rerun the analysis and see what
changing them does. The leaderboard tables and Figure~\ref{fig:leaderboard} are generated from the
frozen board by script, and an automated check verifies every table entry against it.
Appendix~\ref{app:theory-checks} gives the setting of each theory check.

\bibliography{iclr2027_conference}
\bibliographystyle{iclr2027_conference}

\appendix

\section{Theory}
\label{app:theory}

This appendix collects the formal setting behind Section~\ref{sec:method} and the proofs of its
results, ordered by what the main text needs. Section~\ref{app:setting} fixes the measure-theoretic
conventions and the model of an agent system. Section~\ref{app:windows} states
Assumption~\ref{ass:main} in full and proves Theorem~\ref{thm:observable-attribution}.
Section~\ref{app:estimation} turns that bound into a finite-sample certificate. The remaining
sections go beyond what the experiments exercise: Section~\ref{app:composition} carries error
across several stages, Section~\ref{app:geometry} builds the behavioural distance in which
equivalence and substitution are stated, and Section~\ref{app:reference} formalises the human
review graph the benchmark scores against. Appendix~\ref{app:validation} validates the campaign.
Appendix~\ref{app:leaderboard} gives the same-pool scores and the full-length leaderboard. Appendix~\ref{app:readout-validity} tests the readouts by evidence
ablation, Appendix~\ref{app:campaign} reports the capability campaign, and
Appendix~\ref{app:theory-checks} the theory checks, and Appendix~\ref{app:scope} records where the
framework stops.

%------------------------------------------------------------------------------------------------
\subsection{Formal setting}
\label{app:setting}

\paragraph{Measurable spaces and kernels.}
All state, action, observation and interface spaces are standard Borel spaces, which guarantees
regular conditional distributions and rules out measure-theoretic pathologies. A Markov kernel
$K:X\rightsquigarrow Y$ assigns to each $x\in X$ a probability measure $K(\cdot\mid x)$ on $Y$,
measurably in $x$. A probability measure $\mu$ on $X$ is pushed through $K$ by
\begin{equation}
(\mu K)(A)=\int_X K(A\mid x)\,\mu(dx),\qquad A\subseteq Y ,
\end{equation}
and for a measurable map $f:X\to Y$ we write $f_{\push}\mu$ for the ordinary pushforward.

\paragraph{Couplings and optimal transport.}
For probability measures $\mu$ on $X$ and $\nu$ on $Y$, let $\Pi(\mu,\nu)$ be the set of couplings
with those marginals. Given a bounded non-negative cost $d:X\times Y\to\R_{\ge0}$,
\begin{equation}
\W_d(\mu,\nu)=\inf_{\Gamma\in\Pi(\mu,\nu)}\int_{X\times Y}d(x,y)\,\Gamma(dx,dy).
\label{eq:wasserstein}
\end{equation}
When $X=Y$ and $d$ is a metric this is the $1$-Wasserstein distance. We keep the same notation when
$X$ and $Y$ differ but share a meaningful ground cost, which is what lets two systems with
different internal spaces be compared at all. Several proofs below select optimal couplings
measurably; Assumption~\ref{ass:ot-regularity} (Appendix~\ref{app:geometry}) is assumed throughout
this appendix.

\paragraph{Amplification factors.}
A Markov kernel $Q:X\rightsquigarrow Y$ has \emph{amplification factor} at most $L$ on a family
$\mathfrak P$ of input laws if $\W_{d_Y}(\nu Q,\nu'Q)\le L\,\W_{d_X}(\nu,\nu')$ for all
$\nu,\nu'\in\mathfrak P$. Then $L$ is a Lipschitz constant of $\nu\mapsto\nu Q$ on $\mathfrak P$,
and this appendix uses the two names interchangeably. The family $\mathfrak P$ is always the laws the benchmark actually
reaches, never all probability measures. Where a kernel has discontinuities, such as a truncation
or a top-$k$ cut-off, we use the affine form
$\W_{d_Y}(\nu Q,\nu'Q)\le L\,\W_{d_X}(\nu,\nu')+\xi$ with a slack $\xi\ge0$. Factors multiply along
a chain: if $Q_1$ and $Q_2$ have factors $L_1$ and $L_2$ on the laws each receives, then $Q_1Q_2$
has factor at most $L_1L_2$.

\paragraph{Pseudometrics.}
A pseudometric is non-negative, symmetric and satisfies the triangle inequality, but may assign
distance zero to distinct objects. Here that is a feature: two histories can be operationally
different and still indistinguishable at the interface an evaluation declares.

\paragraph{Stochastic agent systems.}
The main text treats an agent system as a distribution over executions. Formally:

\begin{definition}[Stochastic agent system]
A finite-horizon stochastic agent system is a tuple
$\cS=(\cC,\cH,\cA,\cO,\iota,\pi,K,H)$, where
\begin{itemize}[leftmargin=1.4em,itemsep=1pt]
\item $\cC$ is a space of task contexts;
\item $\cH=\bigsqcup_{t=0}^{H}\cH_t$ is a disjoint union of history spaces, $\cH_t$ holding
histories of length $t$;
\item $\cA$ and $\cO$ are action and observation spaces;
\item $\iota:\cC\rightsquigarrow\cH_0$ is an initial-history kernel;
\item $\pi:\cH\rightsquigarrow\cA$ is a policy kernel supported on the admissible actions at each
history;
\item $K:\cH\times\cA\rightsquigarrow\cO\times\cH$ is an environment-and-tool kernel whose
next-history component lies in $\cH_{t+1}$ when the current history lies in $\cH_t$;
\item $H<\infty$ is the maximum horizon.
\end{itemize}
Histories that terminate before $H$ are made absorbing.
\end{definition}

The context holds every externally controlled variable relevant to a comparison. For a
literature-review task it may be $c=(q,\cD,\cR,t_{\max},B,\cU)$: the research question, the
accessible corpus, scope and writing requirements, a publication cutoff, a resource budget and the
available tools. The formulation is deliberately more general than a Markov decision process. If a
sufficient state summary exists it may replace the history; if none does, nothing here assumes one.

\paragraph{Branch kernels and computation trees.}
Two systems can carry out the same step through very different internal actions, so their
executions are compared through a shared vocabulary of labels rather than through those actions.
Let $(\cL,d_{\cL})$ be a common label space, whose labels record externally meaningful facts about an
executed branch --- stage, action class, tool class, resource use and observable effect --- and let
each system carry a measurable labelling map $\ell_{\cS}:\cH\times\cA\times\cO\times\cH\to\cL$.
The induced \emph{branch kernel} $P_{\cS}:\cH\rightsquigarrow\cL\times\cH$ is
\begin{equation}
P_{\cS}(D\mid h)=\int_{\cA}\int_{\cO\times\cH}
\ind\big\{(\ell_{\cS}(h,a,o,h'),h')\in D\big\}\,K(do,dh'\mid h,a)\,\pi(da\mid h)
\label{eq:branch-kernel}
\end{equation}
for measurable $D\subseteq\cL\times\cH$. Sampling from $P_{\cS}$ recursively unfolds a rooted
stochastic computation tree whose branches are all possible executions; a rollout is one sampled
root-to-leaf path. Parallel or dependency-structured executions can be written as DAGs and unfolded
into histories without changing the probability distribution over executions.

\paragraph{Semantic interfaces.}
Systems need not share states or actions, but they must expose comparable artifacts.

\begin{definition}[Semantic interface]
Let $(\cZ,d_{\cZ})$ be a common metric space. A \emph{semantic interface} for $\cS$ is a measurable
map $\phi_{\cS}:\cH\to\cZ$. It is \emph{comparison-sufficient} for a family $\cF$ of evaluations if
every $f\in\cF$ factors through it: $f(h)=\bar f(\phi_{\cS}(h))$ for some $\bar f:\cZ\to\R$.
\end{definition}

For a review agent an interface state is $z=(P,E,G,O,b,r)$: a paper set, an evidence ledger, a
synthesis graph, an outline, the remaining budget and a vector of stage-level verification results.
Private chain-of-thought need not be observed.

%------------------------------------------------------------------------------------------------
\subsection{Windows, readouts and the attribution bound}
\label{app:windows}

Section~\ref{sec:method} introduces windows and readouts in enough detail to carry its argument.
This section gives their precise forms, states Assumption~\ref{ass:main} in full, and proves
Theorem~\ref{thm:observable-attribution}.

\paragraph{Frontiers and windows.}
A stage is delimited by when an execution stops, not by where one source file ends.

\begin{definition}[Execution frontier]
An \emph{execution frontier} $F\subseteq\cH$ is a measurable antichain of histories: no member is a
strict prefix of another. An execution reaches $F$ at the stopping time $\tau_F=\inf\{t:h_t\in F\}$.
\end{definition}

\begin{definition}[Comparison window]
A \emph{comparison window} $M=[F^{\mathrm{in}},F^{\mathrm{out}}]$ is an entry frontier and a later
exit frontier such that every evaluated execution entering $F^{\mathrm{in}}$ reaches
$F^{\mathrm{out}}$, or an explicitly labelled failure state, within a bounded number of steps. It
induces the \emph{macro-kernel}
\begin{equation}
Q_{\cS}^{M}:F^{\mathrm{in}}\rightsquigarrow\cX_M\times F^{\mathrm{out}},
\label{eq:macro-kernel}
\end{equation}
where $\cX_M$ records the labelled local trace, cumulative resources and failure indicators inside
the window.
\end{definition}

The internal trace may be a sequence, a dependency DAG, or its tree unfolding. Let $d_M$ be a
bounded, structure-aware discrepancy on $\cX_M\times F^{\mathrm{out}}$ --- for instance the
recursive distance of Section~\ref{app:geometry}, a tree edit distance, or an optimal-transport
distance on attributed execution DAGs \cite{otap2026}. Entry histories are mapped into a common
boundary space $(\cB_M,d_{\cB_M})$ by maps $\psi_X^{\mathrm{in}}$, and exits into a common space
$(\cZ_M,d_M^{\mathrm{out}})$. Throughout, $\cR_M(\rho)\subseteq\cZ_M$ is the set of exit states
reachable under the benchmark distribution $\rho$.

Assumption~\ref{ass:main} applies $Q_X^M$ to laws on $\cB_M$, meaning the exit-interface marginal
of \eqref{eq:macro-kernel} after the two interface maps. Writing it as a kernel on $\cB_M$ rather
than on entry histories presupposes a \emph{boundary factorization}: the exit law depends on the
entry history $h$ only through $\psi_X^{\mathrm{in}}(h)$, so that
$Q_X^M(\cdot\mid h)=Q_X^M(\cdot\mid\psi_X^{\mathrm{in}}(h))$ for almost every entry history reached
under $\rho$. This is stronger than boundary sufficiency, condition (i) below, which only places the
confounders in the boundary: it also rules out any other route from the entry history to the exit.
It is assumed throughout this subsection.

\paragraph{The identification assumption.}
Assumption~\ref{ass:main} has two halves, which do different work. Conditions (i)--(iv) make the
window the only thing the evaluation credits; the constants in (v) bound, on the score scale,
everything else that differs between the two pipelines.

\begin{assumption}[Identification and nuisance bounds]\label{ass:main}
On the declared benchmark support:
\begin{enumerate}[label=(\roman*),leftmargin=2em,itemsep=1pt]
\item \emph{Boundary sufficiency.} Every upstream variable causally influencing both the selected
module and its evaluated outcome lies in the entry boundary or is fixed by protocol.
\item \emph{No hidden bypass.} Every evaluated causal path from the entry frontier to the exit
artifact passes through the window or appears as an explicit branch of its macro-kernel.
\item \emph{Resource accounting.} Tokens, tool calls, latency, monetary cost and accessible tools
that can affect behaviour are matched, or carried in the branch labels and the boundary metric.
\item \emph{Common exit semantics.} Both systems expose exits in $(\cZ_M,d_M^{\mathrm{out}})$, and
$d_M$ dominates exit discrepancy: $d_M((x,z),(x',z'))\ge d_M^{\mathrm{out}}(z,z')$.
\item \emph{Nuisance bounds.} For $X\in\{A,B\}$ there are constants with
\begin{align*}
\W_{d_{\cB_M}}(\mu_X,\mu)&\le\epsilon_{\mathrm{in},X},\\
\W_{d_{\cY_n}}\big(C_{X,n}(\cdot\mid z),C_n(\cdot\mid z)\big)&\le\epsilon_{\mathrm{cont},X,n}
\quad\text{for all }z\in\cR_M(\rho),\\
\W_{d_M^{\mathrm{out}}}(\nu Q_X^M,\nu' Q_X^M)&\le L_{M_X}\W_{d_{\cB_M}}(\nu,\nu')+\xi_{M_X},\\
\W_{d_{\cY_n}}(\nu C_n,\nu' C_n)&\le L_{C_n}\W_{d_M^{\mathrm{out}}}(\nu,\nu');
\end{align*}
$s_n$ is $L_{s_n}$-Lipschitz, and $\epsilon_{\mathrm{protocol},n}$ bounds on the score scale all
remaining resource, alignment, annotation and protocol imbalance. The last two displays need hold
only for the laws $\nu,\nu'$ that occur on the declared benchmark support, including the declared
common entry law $\mu$ and its image $\mu Q_X^M$, not for arbitrary pairs of measures. The slack $\xi_{M_X}\ge0$ absorbs discontinuities such as truncation or a top-$k$
cut-off, as $\xi_k$ does in Assumption~\ref{ass:prop} (Appendix~\ref{app:composition}); $\xi_{M_X}=0$ is the pure Lipschitz case.
\end{enumerate}
\end{assumption}

Each of (i)--(iv) is necessary, and none asks the two modules to behave alike. If boundary
sufficiency fails, a local difference can be an upstream confound. If a path bypasses the window,
credit for it is misattributed to the window. If resources go unaccounted, an apparent improvement
may be additional computation. If exit semantics differ, there is nothing well-defined to compare.

Condition (v) is of a different kind. Conditions (i)--(iv) are qualitative, and if one fails the
bound identifies nothing. Condition (v) is quantitative: it asks that the constants exist but
places no limit on their size. Theorem~\ref{thm:observable-attribution} holds whatever they are, and their
size decides only whether the resulting radius is small enough to pass the tolerance of
Definition~\ref{def:fair-comparability}. Its constants $L_{M_X}$ bound each module's sensitivity
to a perturbation of its entry; they do \emph{not} bound the behavioural difference between $M_A$
and $M_B$.

\paragraph{Readouts.}
For one observed $y\sim\mu_XQ_X^MC_{X,n}$, $s_n(y)$ is a number. Because the window is stochastic, the observed
value $\theta_{X,n}^{\mathrm{obs}}$ of \eqref{eq:observed-value} is a population expectation, and
must in general be estimated (Section~\ref{app:estimation}). A failure or missing readout is
recorded as a declared symbol $\bot_n$ with a prespecified score, never by discarding the failed
run. A readout index $n$ may encode a stage,
a document section and a criterion, for instance
$n=(\text{retrieval},\text{introduction},\text{citation coverage})$.

\begin{definition}[Direct observability]\label{def:direct-observability}
Fix a canonical measurable readout map $r_n:\cZ_M\to\cY_n$. A window $M_X$ is \emph{directly
observable under $n$} if the readout is available at its exit frontier and
$C_{X,n}(\cdot\mid z)=\delta_{r_n(z)}$ on the reachable exit support. Write $\mathfrak D_n$ for the
class of such windows. Equivalently, the score factors through the common exit interface with no
downstream stage.
\end{definition}

Whether a window is directly observable depends on the window, the interface and the readout map.
It does not depend on which runs happened to produce a readout: a window does not become directly
observable because its failed runs are set aside.

\paragraph{Proof of the attribution bound.}

\begin{proof}[Proof of Theorem~\ref{thm:observable-attribution}]
Write the discrepancy as a difference of one term per system:
\begin{equation*}
\Delta_n^{\mathrm{obs}}-\Delta_n^{\star}
=\big[V_n(\mu_AQ_A^MC_{A,n})-V_n(\mu Q_A^MC_n)\big]
-\big[V_n(\mu_BQ_B^MC_{B,n})-V_n(\mu Q_B^MC_n)\big].
\end{equation*}
By the triangle inequality it is enough to bound each bracket and add.

Fix $X$. Its natural pipeline differs from the controlled one in two places, the entry law and the
channel. Insert the law that changes only the channel, $\mu_XQ_X^MC_n$:
\begin{equation*}
\begin{aligned}
V_n(\mu_XQ_X^MC_{X,n})-V_n(\mu Q_X^MC_n)
&=\underbrace{V_n(\mu_XQ_X^MC_{X,n})-V_n(\mu_XQ_X^MC_n)}_{\text{channel}}\\
&\quad+\underbrace{V_n(\mu_XQ_X^MC_n)-V_n(\mu Q_X^MC_n)}_{\text{entry}} .
\end{aligned}
\end{equation*}

\emph{Step 1: from scores to distributions.} Because $s_n$ is $L_{s_n}$-Lipschitz,
Kantorovich--Rubinstein duality gives $|V_n(\lambda)-V_n(\lambda')|\le
L_{s_n}\W_{d_{\cY_n}}(\lambda,\lambda')$ for any two laws on $\cY_n$. Each term is therefore at most
$L_{s_n}$ times a transport distance.

\emph{Step 2: the channel term.} Both laws push the same exit law $\mu_XQ_X^M$, which is
supported on $\cR_M(\rho)$, through different channels. Coupling the two channels optimally at each
exit $z$ and integrating over $z$ gives
$\W(\mu_XQ_X^MC_{X,n},\mu_XQ_X^MC_n)\le\epsilon_{\mathrm{cont},X,n}$.

\emph{Step 3: the entry term.} Both laws share the window and the channel and differ only in
entry. If $\mu_X=\mu$ the two laws are identical and the term is zero, with no appeal to $L_{M_X}$
or $\xi_{M_X}$: the slack is charged only for a system whose entry differs from $\mu$. Otherwise, apply the Lipschitz bound of $C_n$ and then that of $Q_X^M$ with
$\nu=\mu_X$ and $\nu'=\mu$:
\begin{equation*}
\begin{aligned}
\W(\mu_XQ_X^MC_n,\mu Q_X^MC_n)&\le L_{C_n}\W(\mu_XQ_X^M,\mu Q_X^M)
\le L_{C_n}\big(L_{M_X}\W(\mu_X,\mu)+\xi_{M_X}\big)\\
&\le L_{C_n}\big(L_{M_X}\epsilon_{\mathrm{in},X}+\xi_{M_X}\big).
\end{aligned}
\end{equation*}

\emph{Step 4: assemble.} Each bracket is at most
$L_{s_n}\big(L_{C_n}(L_{M_X}\epsilon_{\mathrm{in},X}+\xi_{M_X})+\epsilon_{\mathrm{cont},X,n}\big)$.
Summing over
$X\in\{A,B\}$ and adding $\epsilon_{\mathrm{protocol},n}$, which covers every imbalance the named
constants do not, gives \eqref{eq:outside-bound}.
\end{proof}

\begin{corollary}[Non-expansive windows]\label{cor:nonexpansive}
If $L_{M_A},L_{M_B}\le1$ the windows do not amplify entry mismatch, and the bracket in
\eqref{eq:outside-bound} is at most
$L_{C_n}(\epsilon_{\mathrm{in},A}+\epsilon_{\mathrm{in},B}+\xi_{M_A}+\xi_{M_B})
+\epsilon_{\mathrm{cont},A,n}+\epsilon_{\mathrm{cont},B,n}$. Its multi-stage analogue is
Corollary~\ref{cor:nonexpansive-stages}. Estimating $L_{M_X}$ requires entering the same window
from two different entry laws. For a system window, which begins at the task input, this is
possible: the benchmark sets the task and the searchable corpus, and Appendix~\ref{app:tc-two-entries}
estimates $L_{M_X}$ for the published pipelines by entering each twice on the same task. For a
stage window of a published pipeline it is not, because a black-box pipeline lets the stage's input
artifact be observed but not set (Appendix~\ref{app:scope}).
\end{corollary}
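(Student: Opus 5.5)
The corollary is a direct specialization of Theorem~\ref{thm:observable-attribution}, so I would not reopen the coupling argument. I would read off the bracket of \eqref{eq:outside-bound} and bound it term by term. The bracket is
\[
L_{C_n}\big(L_{M_A}\epsilon_{\mathrm{in},A}+L_{M_B}\epsilon_{\mathrm{in},B}+\xi_{M_A}+\xi_{M_B}\big)+\epsilon_{\mathrm{cont},A,n}+\epsilon_{\mathrm{cont},B,n}.
\]
All constants here are non-negative: Wasserstein distances, amplification factors and slacks. Under $L_{M_X}\le1$ we have $L_{M_X}\epsilon_{\mathrm{in},X}\le\epsilon_{\mathrm{in},X}$ for each $X\in\{A,B\}$. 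Multiplying by $L_{C_n}\ge0$ preserves the inequality, and the channel terms are unchanged. This gives the stated bound, and multiplying by $L_{s_n}$ and adding $\epsilon_{\mathrm{protocol},n}$ gives the corresponding bound on $\eoutn{n}$. I would also note that the theorem's convention survives the specialization: if $\mu_X=\mu$, Step~3 of the proof gives zero for the entry term. Both $\epsilon_{\mathrm{in},X}$ and the charge for $\xi_{M_X}$ can then be dropped, so the corollary's bound is conservative in that case.

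The interpretive claim, that the windows do not amplify entry mismatch, follows from the same inequality applied before integration. It is the third display of Assumption~\ref{ass:main}(v) with $L_{M_X}\le1$, namely $\W(\mu_XQ_X^M,\mu Q_X^M)\le\W(\mu_X,\mu)+\xi_{M_X}$. Entry mismatch therefore reaches the exit at most unchanged, up to the discontinuity slack. The forward reference to Corollary~\ref{cor:nonexpansive-stages} needs no proof here.

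The only step that needs care is the remark on estimating $L_{M_X}$. I would justify it from the definition of the amplification factor on the family $\mathfrak P$ of reached laws. A lower bound or an estimate of $L_{M_X}$ needs two distinct entry laws $\nu\neq\nu'$ pushed through the same kernel $Q_X^M$, because the ratio $\W(\nu Q_X^M,\nu'Q_X^M)/\W(\nu,\nu')$ is undefined from a single entry. For a system window the entry is the task context $c$, which the benchmark controls, so two entries per task are achievable. For a stage window of a black-box pipeline the entry law on $\cB_M$ can be observed through $\psi_X^{\mathrm{in}}$ but not intervened on, so no second law can be imposed. I expect no real obstacle beyond keeping these two claims distinct: one is a mathematical consequence of the theorem, and the other is an operational remark about identifiability of the constants.
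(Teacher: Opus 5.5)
Your proposal is correct and matches the paper, which gives no separate proof and treats the corollary as the immediate substitution of $L_{M_A},L_{M_B}\le1$ into the bracket of \eqref{eq:outside-bound}, using that $L_{C_n}$, $\epsilon_{\mathrm{in},X}$ and the slacks are non-negative. You are also right to treat the remarks on estimating $L_{M_X}$ as operational claims rather than mathematical ones; the paper supports them only by the two-entry experiment of Appendix~\ref{app:tc-two-entries} and the scope discussion of Appendix~\ref{app:scope}.
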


\paragraph{When terms vanish.}
If $M_A,M_B\in\mathfrak D_n$ and share a canonical readout, both channel terms are zero and the
bound reduces to its entry, slack and protocol terms. If one or both windows are read through a
downstream stage, they remain certifiably comparable provided the channels end in the same
observation space with common semantics and $\eoutn{n}\le\efairn{n}$. Two identical channels
contribute no channel mismatch, however long, though a large $L_{C_n}$ still amplifies whatever
entry mismatch remains. If in addition both entry laws equal $\mu$, Step~3 charges neither entry
nor slack, and the radius reduces to $\epsilon_{\mathrm{protocol},n}$. With the protocol matched
too, $\eoutn{n}=0$: this is the exact regime of Section~\ref{sec:leaderboard}.

%------------------------------------------------------------------------------------------------
\subsection{Estimation and certification}
\label{app:estimation}

Theorem~\ref{thm:observable-attribution} bounds the gap between two population quantities. An
evaluation observes neither: it sees finitely many rollouts on finitely many tasks. This section
says what can be certified from them.

\paragraph{A finite-sample certificate.}
Let $c_1,\dots,c_N\overset{\mathrm{iid}}{\sim}\rho$ be task contexts. A context may carry several
paired interventions and repeated rollouts, but the context is the statistical cluster. Let
$\ell(c)\in[0,1]$ be an ideal context-level loss with population mean
$\bar\ell=\E_{c\sim\rho}[\ell(c)]$, let $\ell_i\in[0,1]$ be its rollout-based estimate on $c_i$, and
set $\widehat{\bar\ell}_N=N^{-1}\sum_i\ell_i$. Let $\eta_{\mathrm{roll}}$, $\eta_{\mathrm{align}}$ and
$\eta_{\mathrm{ann}}$ be deterministic bounds on finite-rollout approximation, learned alignment
error and reference annotation error, whose sum bounds
$\big|N^{-1}\sum_i\ell_i-N^{-1}\sum_i\ell(c_i)\big|$. These bounds are \emph{assumed} rather than
derived. The rollout error in particular is random, so a deterministic bound on it is a modelling
assumption that can hold only with high probability.

\begin{proposition}[Finite-sample upper certificate]\label{prop:finite-sample}
For any $\delta\in(0,1)$, with probability at least $1-\delta$,
\begin{equation}
\bar\ell\le\widehat{\bar\ell}_N+\sqrt{\frac{\log(1/\delta)}{2N}}
+\eta_{\mathrm{roll}}+\eta_{\mathrm{align}}+\eta_{\mathrm{ann}}.
\label{eq:certificate}
\end{equation}
\end{proposition}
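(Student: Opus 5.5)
The proof splits $\bar\ell-\widehat{\bar\ell}_N$ into a sampling term and an approximation term and bounds each separately. Write
\begin{equation*}
\bar\ell-\widehat{\bar\ell}_N
=\Big[\bar\ell-\tfrac1N\textstyle\sum_i\ell(c_i)\Big]
+\Big[\tfrac1N\textstyle\sum_i\ell(c_i)-\tfrac1N\textstyle\sum_i\ell_i\Big].
\end{equation*}
The second bracket is at most $\eta_{\mathrm{roll}}+\eta_{\mathrm{align}}+\eta_{\mathrm{ann}}$ in absolute value, by the assumption stated just before Proposition~\ref{prop:finite-sample}. That assumption was made on exactly this difference of empirical means. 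It is deterministic, so it holds on the same event as everything else and needs no additional probability budget. If one instead reads $\eta_{\mathrm{roll}}$ as holding only with high probability, as the text warns, its failure probability would have to be added to $\delta$ by a union bound. I would state the proof under the deterministic reading given in the hypothesis.

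The first bracket is handled by a one-sided Hoeffding inequality for the ideal losses. The contexts $c_1,\dots,c_N$ are i.i.d.\ from $\rho$, so the variables $\ell(c_i)\in[0,1]$ are i.i.d.\ with mean $\bar\ell$. Hoeffding then gives
\begin{equation*}
\Prb\Big(\bar\ell-\tfrac1N\textstyle\sum_i\ell(c_i)\ge t\Big)\le\exp(-2Nt^2).
\end{equation*}
Setting the right-hand side equal to $\delta$ yields $t=\sqrt{\log(1/\delta)/(2N)}$. Independence is needed only across contexts, since the context is the cluster. Repeated rollouts and paired interventions within a context are absorbed into $\ell(c)$, and their dependence is irrelevant. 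We do need $\ell$ to be a measurable function of $c$ so that the $\ell(c_i)$ are genuine i.i.d.\ random variables; this holds under the standard Borel setting of Appendix~\ref{app:setting}.

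To conclude, on the event of probability at least $1-\delta$ we have $\bar\ell<\frac1N\sum_i\ell(c_i)+t$. Adding the deterministic bound on the second bracket gives \eqref{eq:certificate}.

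The only real subtlety is the status of the $\eta$ terms. Hoeffding must be applied to the ideal $\ell(c_i)$ and not to the observed $\ell_i$. The observed losses need not be independent of one another or unbiased, because alignment models and annotations are shared across contexts. Routing through the ideal losses is what keeps the i.i.d.\ argument valid, and I expect the remaining steps to be routine.
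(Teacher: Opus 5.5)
Your proposal is correct and follows essentially the same route as the paper: one-sided Hoeffding applied to the ideal context-level losses $\ell(c_i)$, then the assumed deterministic bound $\eta_{\mathrm{roll}}+\eta_{\mathrm{align}}+\eta_{\mathrm{ann}}$ to pass to $\widehat{\bar\ell}_N$, combined by the triangle inequality. Your extra remarks on the status of $\eta_{\mathrm{roll}}$ and on applying Hoeffding only to the ideal losses are accurate; the paper makes the first point in the text preceding the proposition.
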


\begin{proof}
The ideal losses $\ell(c_i)$ are independent and lie in $[0,1]$, so Hoeffding's inequality gives,
with probability at least $1-\delta$,
$\bar\ell\le N^{-1}\sum_i\ell(c_i)+\sqrt{\log(1/\delta)/(2N)}$. Replacing the ideal empirical mean by
$\widehat{\bar\ell}_N$ costs at most $\eta_{\mathrm{roll}}+\eta_{\mathrm{align}}+\eta_{\mathrm{ann}}$
by assumption, and the triangle inequality adds the two.
\end{proof}

Repeated rollouts of one context do not increase $N$: they tighten $\eta_{\mathrm{roll}}$, not the
sampling term. In practice $\eta_{\mathrm{roll}}$ can be estimated from repeated runs and a
hierarchical bootstrap reported beside \eqref{eq:certificate}. The certificate bounds whatever loss
$\ell$ measures. When $\ell$ is a controllability loss it certifies equivalence or a substitution
component --- but not fairness, because such a loss includes the target difference inside the
window.

\paragraph{Certifying a comparison.}
Fairness is certified from an estimate of the nuisance radius itself.

\begin{definition}[Empirically certified fair comparability]\label{def:empirical-fairness}
Let $\heoutn{n}$ estimate \eqref{eq:outside-bound}, with a one-sided radius $r_n(\delta)$ such that
$\Prb\big[\eoutn{n}\le\heoutn{n}+r_n(\delta)\big]\ge1-\delta$. The pair is \emph{empirically
certified $\efairn{n}$-fair under $n$} at confidence $1-\delta$ if
$\heoutn{n}+r_n(\delta)\le\efairn{n}$.
\end{definition}

The precision of the score contrast is a separate matter. If $\widehat\Delta_n^{\mathrm{obs}}$ has
two-sided sampling radius $q_n(\delta')$, then Theorem~\ref{thm:observable-attribution} gives, with
probability at least $1-\delta'$,
\begin{equation}
\Delta_n^{\star}\in\big[\widehat\Delta_n^{\mathrm{obs}}-q_n(\delta')-\eoutn{n},\;
\widehat\Delta_n^{\mathrm{obs}}+q_n(\delta')+\eoutn{n}\big],
\end{equation}
and a direction is certified when this interval excludes zero, which is \eqref{eq:decision}. In
practice $\eoutn{n}$ is unknown and is replaced by its upper bound $\heoutn{n}+r_n(\delta)$, so a
direction is certified when
$\big|\widehat\Delta_n^{\mathrm{obs}}\big|>q_n(\delta')+\heoutn{n}+r_n(\delta)$. The interval then
contains $\Delta_n^{\star}$ whenever both the sampling event and the event of
Definition~\ref{def:empirical-fairness} hold, which by a union bound has probability at least
$1-\delta-\delta'$. A certified direction is therefore wrong with probability at most
$\delta+\delta'$. In exact cells the radius is zero by construction rather than estimated, so
$r_n=0$ and only $\delta'$ is spent.

A pair that fails this test is \emph{undecided}, for one of two reasons. It is
\emph{attribution-limited} when the population gap $|\Delta_n^{\mathrm{obs}}|$ lies within the
nuisance radius $\eoutn{n}$, so that more samples alone cannot certify the ordering, and
\emph{sampling-limited} when the gap exceeds the nuisance radius but sampling uncertainty prevents
certification.

\paragraph{Rank intervals.}
Certified relations define a partially resolved relation rather than a total order. Because
pair-specific radii differ, the certified relation need not be transitive, and in principle it could
contain a cycle. When it is acyclic, its transitive closure is a strict partial order, and a system's
\emph{rank interval} is the range of positions it takes across all linear extensions of that order.
Provided the sampling radii are valid, the family-wise correction makes every certified relation hold
simultaneously with the stated probability, and the relations the closure adds hold on the same event,
since true utilities are ordered transitively. Appendix~\ref{app:tc-calibration} checks the realised
rate against a sign-flip null.

%------------------------------------------------------------------------------------------------
\subsection{Composition across stages}
\label{app:composition}

Theorem~\ref{thm:observable-attribution} concerns one window. A pipeline has several stages, and a
difference introduced at one is carried --- and possibly amplified --- by every stage after it. This
section bounds that propagation. It is stated in general; testing it needs every stage to be run
from a controlled input, which published pipelines do not allow: their intermediate artifacts can be
recovered from logs but not set (Appendix~\ref{app:scope}).

\paragraph{Setting.}
Let $(\cZ_k,d_k)$, $k=0,\dots,K$, be the interfaces between $K$ stages. System $X\in\{A,B\}$ has
stage kernels $Q_{X,k}:\cZ_{k-1}\rightsquigarrow\cZ_k$ and initial law $\nu_{X,0}$, and its law at
interface $k$ is $\nu_{X,k}=\nu_{X,k-1}Q_{X,k}$. Let $\cR_k(\rho)\subseteq\cZ_k$ be the interface
states reachable under the benchmark and its declared interventions, and $\mathfrak P_k(\rho)$ an
admissible family of laws supported there. Every law below, including those in the proofs, is
assumed admissible; this avoids claiming stability on histories no benchmark reaches.

\begin{assumption}[Local replacement error]\label{ass:local-error}
For each $k$ there is $\epsilon_k\ge0$ with
$\sup_{z\in\cR_{k-1}(\rho)}\W_{d_k}\big(Q_{A,k}(\cdot\mid z),Q_{B,k}(\cdot\mid z)\big)\le\epsilon_k$.
\end{assumption}

\begin{assumption}[Approximate downstream stability]\label{ass:prop}
For each $k$ there are $L_k,\xi_k\ge0$ with
\begin{equation}
\W_{d_k}(\nu Q_{B,k},\nu' Q_{B,k})\le L_k\W_{d_{k-1}}(\nu,\nu')+\xi_k
\qquad\text{for all }\nu,\nu'\in\mathfrak P_{k-1}(\rho).
\label{eq:kernel-lipschitz}
\end{equation}
\end{assumption}

The slack $\xi_k$ absorbs discontinuities a benchmark actually meets --- routing thresholds, top-$k$
changes, parsing failures, context truncation. System $B$'s kernels serve only as a reference path;
a symmetric result may use either system's, or a common upper bound. The constants belong to the
declared interfaces, metrics and support, not to the implementation alone, and no condition
$L_k<1$ is needed for a finite-stage bound.

\begin{theorem}[Compositional substitution bound]\label{thm:composition}
Under Assumptions~\ref{ass:local-error} and~\ref{ass:prop}, if
$\W_{d_0}(\nu_{A,0},\nu_{B,0})\le\epsilon_0$ then for every $m\in\{1,\dots,K\}$
\begin{equation}
\W_{d_m}(\nu_{A,m},\nu_{B,m})\le\epsilon_0\prod_{j=1}^{m}L_j
+\sum_{k=1}^{m}(\epsilon_k+\xi_k)\prod_{j=k+1}^{m}L_j ,
\label{eq:composition-bound}
\end{equation}
an empty product being $1$.
\end{theorem}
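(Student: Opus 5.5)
We argue by induction on $m$, writing $D_m:=\W_{d_m}(\nu_{A,m},\nu_{B,m})$ and setting $D_0\le\epsilon_0$. The claim \eqref{eq:composition-bound} is the unrolled form of the one-step recursion
\begin{equation*}
D_m\le L_m D_{m-1}+\epsilon_m+\xi_m ,
\end{equation*}
so the plan has three parts. First we prove this recursion. Then we observe that iterating it from $D_0\le\epsilon_0$ gives exactly the stated sum, since each $(\epsilon_k+\xi_k)$ is multiplied by $L_{k+1}\cdots L_m$ and $\epsilon_0$ by $L_1\cdots L_m$. All $L_j\ge0$, so multiplying an inequality by them preserves its direction.

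For the recursion we insert the hybrid law $\nu_{A,m-1}Q_{B,m}$, in which system $A$'s upstream law is pushed through $B$'s stage $m$. The triangle inequality for $\W_{d_m}$ gives
\begin{equation*}
D_m\le\W_{d_m}(\nu_{A,m-1}Q_{A,m},\nu_{A,m-1}Q_{B,m})+\W_{d_m}(\nu_{A,m-1}Q_{B,m},\nu_{B,m-1}Q_{B,m}).
\end{equation*}
For the second term, both $\nu_{A,m-1}$ and $\nu_{B,m-1}$ are admissible laws in $\mathfrak P_{m-1}(\rho)$, as declared in the setting. Assumption~\ref{ass:prop} with $k=m$ therefore bounds this term by $L_m D_{m-1}+\xi_m$. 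This is the only place $B$'s reference path is used, which is why the stability constants are $B$'s.

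The first term is the local replacement error. Both laws mix over the same upstream law $\nu_{A,m-1}$, which is supported on $\cR_{m-1}(\rho)$. We take a measurable selection of optimal couplings $\Gamma_z\in\Pi(Q_{A,m}(\cdot\mid z),Q_{B,m}(\cdot\mid z))$, which exists by Assumption~\ref{ass:ot-regularity}, and integrate it against $\nu_{A,m-1}(dz)$. The result is a coupling of the two mixtures, and its cost is
\begin{equation*}
\int\W_{d_m}\big(Q_{A,m}(\cdot\mid z),Q_{B,m}(\cdot\mid z)\big)\,\nu_{A,m-1}(dz)\le\epsilon_m
\end{equation*}
by Assumption~\ref{ass:local-error}. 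This is the same argument as Step~2 in the proof of Theorem~\ref{thm:observable-attribution}, and it yields the recursion.

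We expect the main obstacle to be this mixing step rather than the algebra. The integrated family of couplings must be a genuine measurable coupling, which needs the measurable-selection regularity. The supremum in Assumption~\ref{ass:local-error} must also apply $\nu_{A,m-1}$-almost everywhere, which requires $A$'s intermediate laws to stay on the reachable support. We would discharge this support condition using the standing convention that every law in the argument, including the hybrid one, is admissible.
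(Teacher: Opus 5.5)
Your proposal is correct and follows the paper's proof essentially step for step. It uses the same hybrid law $\nu_{A,m-1}Q_{B,m}$, the same split into a local-replacement term bounded by $\epsilon_m$ and a propagation term bounded by $L_mD_{m-1}+\xi_m$, and the same unrolling of $D_m\le L_mD_{m-1}+\epsilon_m+\xi_m$ from $D_0\le\epsilon_0$, with the support and admissibility caveat you flag discharged in the paper, as you propose, by its standing convention that every law in the argument is admissible and supported on the reachable states.
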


\begin{proof}
Write $\Delta_k=\W_{d_k}(\nu_{A,k},\nu_{B,k})$. The idea is to change one thing at a time: first the
stage, then the input to it. Insert $\nu_{A,k-1}Q_{B,k}$, which runs system $B$'s stage on system
$A$'s input:
\begin{equation*}
\Delta_k\le\W_{d_k}(\nu_{A,k-1}Q_{A,k},\nu_{A,k-1}Q_{B,k})
+\W_{d_k}(\nu_{A,k-1}Q_{B,k},\nu_{B,k-1}Q_{B,k}).
\end{equation*}
The first term compares the two stages on a common input. Coupling them optimally at each
$z\in\cR_{k-1}(\rho)$ and integrating over $\nu_{A,k-1}$ bounds it by $\epsilon_k$. The second term
runs one stage on two inputs, and \eqref{eq:kernel-lipschitz} bounds it by $L_k\Delta_{k-1}+\xi_k$.
Hence
\begin{equation}
\Delta_k\le\epsilon_k+\xi_k+L_k\Delta_{k-1},\qquad\Delta_0\le\epsilon_0 .
\label{eq:error-recursion}
\end{equation}
Unrolling from $k=m$ down to $k=1$, each error $\epsilon_k+\xi_k$ is multiplied by the constants of
the stages after it, and $\epsilon_0$ by all of them, which is \eqref{eq:composition-bound}.
\end{proof}

\begin{corollary}[Non-expansive stages]\label{cor:nonexpansive-stages}
If $L_k\le1$ for every $k$, then
$\W_{d_K}(\nu_{A,K},\nu_{B,K})\le\epsilon_0+\sum_{k=1}^{K}(\epsilon_k+\xi_k)$: errors add, and no
stage amplifies them.
\end{corollary}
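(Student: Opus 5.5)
The corollary is a specialization of Theorem~\ref{thm:composition}, so I would prove it by substituting $L_k\le1$ into \eqref{eq:composition-bound} at $m=K$, rather than redoing any coupling argument. The theorem gives
\begin{equation*}
\W_{d_K}(\nu_{A,K},\nu_{B,K})\le\epsilon_0\prod_{j=1}^{K}L_j+\sum_{k=1}^{K}(\epsilon_k+\xi_k)\prod_{j=k+1}^{K}L_j .
\end{equation*}
Every product on the right is a product of constants in $[0,1]$, so it is at most $1$. This includes the empty product at $k=K$, which equals $1$ by convention. The multipliers $\epsilon_0$ and $\epsilon_k+\xi_k$ are all non-negative, so replacing each product by $1$ can only increase the right-hand side. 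That step yields $\epsilon_0+\sum_{k=1}^{K}(\epsilon_k+\xi_k)$ directly.

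As a cross-check, I would also run the argument from the one-step recursion \eqref{eq:error-recursion}. With $L_k\le1$ and $\Delta_{k-1}\ge0$, it becomes $\Delta_k\le\Delta_{k-1}+\epsilon_k+\xi_k$. A telescoping induction from $\Delta_0\le\epsilon_0$ then gives $\Delta_m\le\epsilon_0+\sum_{k\le m}(\epsilon_k+\xi_k)$ for every $m\le K$, and in particular for $m=K$. This makes the phrase ``errors add, and no stage amplifies them'' literal: each stage adds its own local error and slack to what it receives, and does not rescale the incoming term.

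I expect no real obstacle. The only points to check are that Assumptions~\ref{ass:local-error} and~\ref{ass:prop} hold with the stated constants on the admissible laws, as the theorem requires, and that the $L_k$ and all error terms are non-negative, which is used in the monotonicity step.
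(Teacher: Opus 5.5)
Your proposal is correct and matches the paper. The paper gives no separate proof and treats this corollary as the immediate specialization of Theorem~\ref{thm:composition} at $m=K$: when $0\le L_k\le1$, every product in the bound is at most $1$, and because $\epsilon_0$ and $\epsilon_k+\xi_k$ are nonnegative, each product can be replaced by $1$. Your cross-check through $\Delta_k\le\epsilon_k+\xi_k+L_k\Delta_{k-1}$ is also valid, and it is the same recursion the paper unrolls to prove the theorem.
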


When two systems differ only on a block of stages, the same recursion gives a bound for the block
propagated to the end of the pipeline.

\begin{corollary}[Propagated bound for a window]\label{cor:window}
Suppose $A$ and $B$ differ on stages $i{:}j$, with
$\W_{d_{i-1}}(\nu_{A,i-1},\nu_{B,i-1})\le\epsilon_{i-1}$, and the common downstream stages
after stage $j$ have constant $L_{j+1:K}=\prod_{k=j+1}^{K}L_k$. Then the terminal discrepancy is at most
\begin{equation}
\epsilon_{\mathrm{subst}}^{\mathrm{global}}
=L_{j+1:K}\Big[\epsilon_{i-1}\prod_{k=i}^{j}L_k
+\sum_{k=i}^{j}(\epsilon_k+\xi_k)\prod_{r=k+1}^{j}L_r\Big]
+\sum_{k=j+1}^{K}\xi_k\prod_{r=k+1}^{K}L_r .
\label{eq:window-bound}
\end{equation}
Any independently bounded difference between the two systems' downstream stages must also be
propagated to the end and added.
\end{corollary}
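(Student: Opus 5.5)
The plan is to reuse the one-step recursion \eqref{eq:error-recursion} from the proof of Theorem~\ref{thm:composition} rather than the unrolled bound \eqref{eq:composition-bound}. The reason is that the pipeline splits into two regimes: stages $i{:}j$, where the kernels differ, and stages $j{+}1{:}K$, where they coincide. Write $\Delta_k=\W_{d_k}(\nu_{A,k},\nu_{B,k})$. The starting point is the hypothesis $\Delta_{i-1}\le\epsilon_{i-1}$, so nothing before stage $i$ needs to be examined. Throughout, Assumptions~\ref{ass:local-error} and~\ref{ass:prop} are taken to hold on the reachable supports $\cR_{k-1}(\rho)$ and the admissible families $\mathfrak P_{k-1}(\rho)$ for every $k\ge i$.

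\emph{Step 1 (the differing block).} For $k=i,\dots,j$, I would run the two-term insertion argument of Theorem~\ref{thm:composition} verbatim. Inserting $\nu_{A,k-1}Q_{B,k}$ gives $\Delta_k\le\epsilon_k+\xi_k+L_k\Delta_{k-1}$. Unrolling from $k=j$ down to $k=i$, the reindexed Theorem~\ref{thm:composition} (interface $i-1$ plays the role of interface $0$) then yields
\[
\Delta_j\le\epsilon_{i-1}\prod_{k=i}^{j}L_k+\sum_{k=i}^{j}(\epsilon_k+\xi_k)\prod_{r=k+1}^{j}L_r ,
\]
which is exactly the bracket in \eqref{eq:window-bound}.

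\emph{Step 2 (the common downstream stages).} For $k>j$ the two systems share $Q_{A,k}=Q_{B,k}$. The first term of the insertion, $\W_{d_k}(\nu_{A,k-1}Q_{A,k},\nu_{A,k-1}Q_{B,k})$, therefore vanishes identically, so $\epsilon_k=0$ can be taken. The second term is still bounded through \eqref{eq:kernel-lipschitz}, and this step keeps the slack $\xi_k$. The recursion becomes $\Delta_k\le L_k\Delta_{k-1}+\xi_k$. Unrolling it from $K$ down to $j+1$ gives
\[
\Delta_K\le L_{j+1:K}\Delta_j+\sum_{k=j+1}^{K}\xi_k\prod_{r=k+1}^{K}L_r .
\]
Substituting the bound on $\Delta_j$ from Step~1 produces \eqref{eq:window-bound}. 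The closing remark of the corollary would be handled the same way. If the downstream stages are not identical but differ by an independently bounded $\epsilon_k$, that term simply survives in the Step~2 recursion and is propagated by $\prod_{r>k}L_r$.

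The step I expect to need the most care is checking that the Lipschitz condition \eqref{eq:kernel-lipschitz} is actually applicable in Step~2. System $A$'s downstream laws $\nu_{A,k-1}$ arise from a different upstream block than system $B$'s, so they must still belong to $\mathfrak P_{k-1}(\rho)$ for the bound to apply. I would discharge this by the standing convention of this subsection that every law appearing, including those in proofs, is admissible. I would also note explicitly that the slack is charged even when the downstream kernels are shared, which is why the last sum in \eqref{eq:window-bound} carries no $\epsilon_k$. The rest of the argument is bookkeeping of products, with empty products equal to $1$.
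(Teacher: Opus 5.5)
Your proposal is correct and follows the paper's intended argument. The paper gives no separate proof beyond noting that the one-step recursion \eqref{eq:error-recursion} of Theorem~\ref{thm:composition} yields the bound, and you do exactly that: you start at interface $i-1$, unroll through the differing block, and then unroll through the shared downstream stages with $\epsilon_k=0$ while keeping the slack $\xi_k$ (equivalently, Theorem~\ref{thm:composition} reindexed from interface $i-1$ with $\epsilon_k=0$ for $k>j$).
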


Because \eqref{eq:window-bound} contains the in-window errors $\epsilon_i,\dots,\epsilon_j$, it is a
substitution bound, not a fairness criterion. The two windows are \emph{$\epsilon$-substitutable}
in the pipeline if \eqref{eq:window-bound} is at most $\epsilon$: replacing one with the other then
moves the pipeline's terminal law by at most $\epsilon$ in $d_K$. The same bound controls any downstream quantity that
is Lipschitz in the final interface.

\begin{theorem}[Utility stability]\label{thm:utility}
Let $U:\cZ_K\to\R$ be $L_U$-Lipschitz with respect to $d_K$. Under the conditions of
Theorem~\ref{thm:composition},
\begin{equation}
\big|\E_{\nu_{A,K}}[U]-\E_{\nu_{B,K}}[U]\big|
\le L_U\Big[\epsilon_0\prod_{j=1}^{K}L_j+\sum_{k=1}^{K}(\epsilon_k+\xi_k)\prod_{j=k+1}^{K}L_j\Big].
\end{equation}
\end{theorem}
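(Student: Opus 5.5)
The plan is to reduce the utility statement to Theorem~\ref{thm:composition} at the terminal interface $m=K$, and then pass from a transport distance to a difference of expectations by Kantorovich--Rubinstein duality, exactly as in Step~1 of the proof of Theorem~\ref{thm:observable-attribution}.

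First, I would invoke Theorem~\ref{thm:composition} with $m=K$. Its hypotheses are Assumptions~\ref{ass:local-error} and~\ref{ass:prop} together with $\W_{d_0}(\nu_{A,0},\nu_{B,0})\le\epsilon_0$, and these are precisely the conditions the present statement assumes. It gives
\[
\W_{d_K}(\nu_{A,K},\nu_{B,K})\le\epsilon_0\prod_{j=1}^{K}L_j+\sum_{k=1}^{K}(\epsilon_k+\xi_k)\prod_{j=k+1}^{K}L_j .
\]

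Second, I would bound the difference of expectations by the transport distance. Take any coupling $\Gamma\in\Pi(\nu_{A,K},\nu_{B,K})$. Then
\[
\big|\E_{\nu_{A,K}}[U]-\E_{\nu_{B,K}}[U]\big|=\Big|\int (U(z)-U(z'))\,\Gamma(dz,dz')\Big|\le L_U\int d_K(z,z')\,\Gamma(dz,dz').
\]
Taking the infimum over $\Gamma$ gives $L_U\,\W_{d_K}(\nu_{A,K},\nu_{B,K})$. This is the easy direction of duality, so it needs neither optimal couplings nor any measurable-selection argument. Multiplying the bound from the first step by $L_U$ then yields the displayed inequality.

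The only point needing care is integrability. $U$ is Lipschitz but not assumed bounded, so I must make sure both expectations are finite. Because $d_K$ is bounded by the conventions of Appendix~\ref{app:setting}, the function $U$ has bounded oscillation $L_U\sup d_K$ on the support. Therefore, if $\E[U]$ is finite under one law, it is finite under the other, and the difference is well defined. I would state this briefly, or else assume $U$ is integrable under the admissible laws $\mathfrak P_K(\rho)$.

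I do not expect a real obstacle here: the substantive work is already done in Theorem~\ref{thm:composition}.
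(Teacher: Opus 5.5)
Your proposal is correct and takes the same approach as the paper: apply Theorem~\ref{thm:composition} with $m=K$, then bound the difference of expectations by $L_U\W_{d_K}(\nu_{A,K},\nu_{B,K})$ via Kantorovich--Rubinstein duality, of which your coupling argument is the easy direction. Your integrability remark can be sharpened: since $d_K$ is bounded, $|U(z)|\le|U(z_0)|+L_U\sup d_K$ for any fixed $z_0$, so $U$ is bounded and hence integrable under every law.
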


\begin{proof}
Kantorovich--Rubinstein duality bounds the difference of expectations by
$L_U\W_{d_K}(\nu_{A,K},\nu_{B,K})$, and Theorem~\ref{thm:composition} with $m=K$ bounds the
distance.
\end{proof}

%------------------------------------------------------------------------------------------------
\subsection{Behavioural geometry: equivalence and substitution}
\label{app:geometry}

Fairness constrains only what surrounds a window. The two stronger claims the main text defers ---
that two windows behave alike, and that one can replace the other --- constrain what happens inside
it. Stating them needs a distance between the behaviours of systems that share neither states nor
actions. This section constructs one and shows it is well defined.

\paragraph{The controllability operator.}
Consider systems $\cS_A,\cS_B$ with history spaces $\cH_A,\cH_B$, branch kernels $P_A,P_B$ and
interfaces $\phi_A,\phi_B$ into shared spaces $(\cL,d_{\cL})$ and $(\cZ,d_{\cZ})$, with
$d_{\cL},d_{\cZ}\in[0,1]$.

\begin{assumption}[Optimal-transport regularity]\label{ass:ot-regularity}
Every ground cost below is bounded and measurable, every displayed optimal-transport infimum is
attained, and each optimal value is measurable in its conditioning histories. These hold
automatically for finite discretisations, and on Polish spaces under standard lower-semicontinuity
and tightness conditions.
\end{assumption}

Fix a continuation weight $\gamma\in[0,1)$ and a label weight $\beta\in(0,1]$. For a bounded
$d:\cH_A\times\cH_B\to[0,1]$, the \emph{lifted branch cost} compares two next branches by their
labels and their successors:
\begin{equation}
\bar d\big((\ell_A,h'_A),(\ell_B,h'_B)\big)=\beta\,d_{\cL}(\ell_A,\ell_B)+(1-\beta)\,d(h'_A,h'_B).
\label{eq:lifted-cost}
\end{equation}

\begin{definition}[Controllability operator]
The operator $\mathcal T_{AB}$ acts on bounded cross-system distances by
\begin{equation}
(\mathcal T_{AB}d)(h_A,h_B)=(1-\gamma)\,d_{\cZ}\big(\phi_A(h_A),\phi_B(h_B)\big)
+\gamma\,\W_{\bar d}\big(P_A(\cdot\mid h_A),P_B(\cdot\mid h_B)\big).
\label{eq:ctrl-operator}
\end{equation}
\end{definition}

The first term measures how far apart the two systems are now; the second finds the cheapest
probabilistic alignment of what they do next. Two systems can therefore use different tools or
decompose the work differently and still be close, provided their labels, effects and future
semantic states can be coupled cheaply.

\begin{theorem}[Existence and uniqueness]\label{thm:fixed-point}
Under Assumption~\ref{ass:ot-regularity}, $\mathcal T_{AB}$ maps bounded measurable functions
$\cH_A\times\cH_B\to[0,1]$ into themselves and is a contraction in the supremum norm with modulus at
most $\gamma(1-\beta)<1$. It therefore has a unique fixed point $d_{AB}^{\dagger}=\mathcal
T_{AB}d_{AB}^{\dagger}$.
\end{theorem}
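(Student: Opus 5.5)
The plan is a Banach fixed-point argument on the complete metric space $\mathcal M$ of bounded measurable functions $d:\cH_A\times\cH_B\to[0,1]$ with the supremum norm. The proof has three parts: well-definedness, contraction, and then completeness plus Banach.

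\emph{Well-definedness.} I first check that $\mathcal T_{AB}$ maps $\mathcal M$ into itself.
\begin{itemize}[leftmargin=1.4em,itemsep=1pt]
\item For $d\in\mathcal M$, the lifted cost $\bar d$ in \eqref{eq:lifted-cost} is a convex combination of $d_{\cL}$ and $d$. It is therefore bounded, measurable and $[0,1]$-valued.
\item Hence $\W_{\bar d}(P_A(\cdot\mid h_A),P_B(\cdot\mid h_B))\in[0,1]$, since every coupling integrates a $[0,1]$ cost.
\item Assumption~\ref{ass:ot-regularity} makes this optimal value measurable in $(h_A,h_B)$.
\item The first term $d_{\cZ}(\phi_A(h_A),\phi_B(h_B))$ is measurable as a composition of measurable maps, and lies in $[0,1]$.
\item A convex combination with weights $1-\gamma$ and $\gamma$ then lands in $[0,1]$, so $\mathcal T_{AB}d\in\mathcal M$.
\end{itemize}

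\emph{Contraction.} Fix $d,d'\in\mathcal M$ and a pair of histories $(h_A,h_B)$. The interface terms cancel, so
\[
|(\mathcal T_{AB}d)(h_A,h_B)-(\mathcal T_{AB}d')(h_A,h_B)|=\gamma\,\big|\W_{\bar d}(P_A,P_B)-\W_{\bar d'}(P_A,P_B)\big|.
\]
Here I use that both costs are minimized over the same coupling set $\Pi(P_A(\cdot\mid h_A),P_B(\cdot\mid h_B))$.
\begin{itemize}[leftmargin=1.4em,itemsep=1pt]
\item Take $\Gamma'$ optimal for $\bar d'$; it exists by Assumption~\ref{ass:ot-regularity}, and otherwise an $\varepsilon$-optimal coupling suffices.
\item Then $\W_{\bar d}\le\int\bar d\,d\Gamma'\le\int\bar d'\,d\Gamma'+\sup|\bar d-\bar d'|$.
\item The label parts of the two costs agree, so $\sup|\bar d-\bar d'|=(1-\beta)\sup|d-d'|$.
\item Symmetrizing the argument gives $|\W_{\bar d}-\W_{\bar d'}|\le(1-\beta)\|d-d'\|_\infty$.
\end{itemize}
Taking the supremum over $(h_A,h_B)$ yields
\[
\|\mathcal T_{AB}d-\mathcal T_{AB}d'\|_\infty\le\gamma(1-\beta)\|d-d'\|_\infty .
\]
The modulus satisfies $\gamma(1-\beta)<1$ because $\gamma<1$ (and indeed $\beta>0$ as well).

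\emph{Completeness and conclusion.} I then verify that $\mathcal M$ is complete. A uniformly Cauchy sequence of measurable $[0,1]$-valued functions converges uniformly. Its pointwise limit is measurable and still $[0,1]$-valued, so $\mathcal M$ is a closed subset of the Banach space of bounded measurable functions. Banach's fixed-point theorem then gives a unique fixed point $d_{AB}^{\dagger}$, obtained as the limit of the iterates $\mathcal T_{AB}^k d_0$ from any starting point, for example $d_0\equiv0$.

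\emph{Main obstacle.} The delicate step is measurability of $(h_A,h_B)\mapsto\W_{\bar d}(P_A(\cdot\mid h_A),P_B(\cdot\mid h_B))$ when $d$ is merely measurable rather than lower semicontinuous. I would not try to prove this in general. It is exactly what Assumption~\ref{ass:ot-regularity} postulates for every ground cost arising in the iteration, so I will invoke it directly. I will also note that in finite discretisations it holds trivially.
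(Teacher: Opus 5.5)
Your proposal is correct and follows the paper's proof essentially step for step: self-map from the $[0,1]$-valued lifted cost, contraction because the lifted costs differ pointwise by at most $(1-\beta)\|d-d'\|_\infty$ (so every coupling's cost, and hence the infimum, moves by at most that much), and Banach's theorem on the complete space of measurable $[0,1]$-valued functions. Your explicit appeal to Assumption~\ref{ass:ot-regularity} for measurability of the transport value, and your $\varepsilon$-optimal-coupling fallback, are minor refinements that the paper leaves implicit.
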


\begin{proof}
\emph{Self-map.} Since $d_{\cL}$, $d_{\cZ}$ and $d$ take values in $[0,1]$, so does the lifted cost
\eqref{eq:lifted-cost}, hence so does its transport value, and $\mathcal T_{AB}d$ is a convex
combination of two $[0,1]$-valued terms.

\emph{Contraction.} Take candidate distances $d_1,d_2$. Their lifted costs differ pointwise by at
most $(1-\beta)\|d_1-d_2\|_\infty$, since only the successor term changes. For fixed marginals,
perturbing a transport cost by at most $r$ everywhere moves its optimum by at most $r$ --- any
coupling's cost moves by at most $r$, and so does the infimum. The interface term does not depend on
$d$ at all. Hence
$\|\mathcal T_{AB}d_1-\mathcal T_{AB}d_2\|_\infty\le\gamma(1-\beta)\|d_1-d_2\|_\infty$.

\emph{Fixed point.} Bounded measurable $[0,1]$-valued functions form a complete metric space under
the supremum norm, and Banach's fixed-point theorem gives a unique fixed point.
\end{proof}

\begin{definition}[Recursive controllability distance]
The fixed point $d_{AB}^{\dagger}$ of Theorem~\ref{thm:fixed-point} is the \emph{recursive
controllability distance} between histories of $A$ and $B$.
\end{definition}

A distance between two systems' histories is more useful if it is a single pseudometric over all
systems at once, so that the triangle inequality holds across a whole panel.

\begin{proposition}[Pseudometric realisation]\label{prop:pseudometric}
Let $\cH_{\sqcup}=\bigsqcup_{X\in\cI}\cH_X$ be the disjoint union of the history spaces of a
collection of systems, with $P(\cdot\mid h)=P_X(\cdot\mid h)$ and $\phi(h)=\phi_X(h)$ for
$h\in\cH_X$. If $d_{\cL}$ and $d_{\cZ}$ are pseudometrics, the fixed point of
\eqref{eq:ctrl-operator} on $\cH_{\sqcup}\times\cH_{\sqcup}$ is a pseudometric, and its restriction
to $\cH_A\times\cH_B$ is $d_{AB}^{\dagger}$.
\end{proposition}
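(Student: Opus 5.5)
The proof runs through the same Banach argument as Theorem~\ref{thm:fixed-point}, carried out on the disjoint union. Set $\cH_{\sqcup}=\bigsqcup_{X}\cH_X$ and $P,\phi$ as in the statement. The operator \eqref{eq:ctrl-operator} is now defined on bounded measurable $d:\cH_{\sqcup}\times\cH_{\sqcup}\to[0,1]$. The self-map and contraction steps in the proof of Theorem~\ref{thm:fixed-point} never used that the two arguments came from different systems, so they apply verbatim. We therefore obtain a unique fixed point $d^{\dagger}$, and the plan is to show that it is a pseudometric. The route is to show that the subset $\mathfrak M$ of pseudometrics is closed in the supremum norm and is mapped into itself by $\mathcal T$. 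The Picard iterates from $d_0\equiv0\in\mathfrak M$ then stay in $\mathfrak M$ and converge to $d^{\dagger}$, so $d^{\dagger}\in\mathfrak M$. Closedness is immediate, because non-negativity, symmetry, vanishing on the diagonal and the triangle inequality are pointwise conditions and are preserved under uniform limits.

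Preservation of the pseudometric axioms under $\mathcal T$ splits term by term. The interface term $d_{\cZ}(\phi(h),\phi(h'))$ is a pseudometric pulled back through $\phi$. For the transport term, if $d$ is a pseudometric then the lifted cost $\bar d$ of \eqref{eq:lifted-cost} is a pseudometric on $\cL\times\cH_{\sqcup}$, being a convex combination of pseudometrics. It remains to check that $\W_{\bar d}$ is a pseudometric on laws over that space.

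\begin{itemize}
\item \emph{Symmetry.} Transpose the coupling.
\item \emph{Vanishing on the diagonal.} The diagonal coupling of $P(\cdot\mid h)$ with itself costs $0$.
\item \emph{Triangle inequality.} This is the main obstacle. Take optimal couplings, which exist by Assumption~\ref{ass:ot-regularity}, between the laws at $h_1,h_2$ and at $h_2,h_3$. Glue them along the common marginal using regular conditional distributions, which are available because the spaces are standard Borel. Then integrate the pointwise triangle inequality for $\bar d$.
\end{itemize}

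Some care is needed because $\bar d$ is only a pseudometric and is merely measurable, not lower semicontinuous. The gluing lemma nonetheless holds on standard Borel spaces with bounded measurable costs, and attainment is only a convenience: $\varepsilon$-optimal couplings give the inequality up to $2\varepsilon$, and letting $\varepsilon\to0$ removes the slack. Measurability of $\mathcal T d$ is already supplied by Assumption~\ref{ass:ot-regularity}. A convex combination of two pseudometrics is a pseudometric, so $\mathcal T d\in\mathfrak M$.

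Finally, for the restriction claim, note that for $h_A\in\cH_A$ and $h_B\in\cH_B$ the value $(\mathcal T d)(h_A,h_B)$ depends on $d$ only through its values on $\cH_A\times\cH_B$. This holds because $P(\cdot\mid h_A)$ is supported on $\cL\times\cH_A$ and $P(\cdot\mid h_B)$ on $\cL\times\cH_B$, histories of a system staying within its own space. Thus the restriction of $d^{\dagger}$ to $\cH_A\times\cH_B$ is a fixed point of $\mathcal T_{AB}$. By the uniqueness part of Theorem~\ref{thm:fixed-point} it equals $d_{AB}^{\dagger}$.
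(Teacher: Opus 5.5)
Your proposal is correct and follows the paper's proof: Picard iteration from $d_0\equiv0$ on the disjoint union, preservation of the pseudometric axioms by each term of $\mathcal T$, closure under uniform limits, and uniqueness of the fixed point for the restriction claim. You also spell out two details the paper leaves implicit: the gluing argument for the triangle inequality of $\W_{\bar d}$, and the observation that $(\mathcal T d)(h_A,h_B)$ depends only on the restriction of $d$ to $\cH_A\times\cH_B$, because each $P_X$ stays within $\cH_X$ --- which is what makes the appeal to uniqueness of the fixed point of $\mathcal T_{AB}$ valid.
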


\begin{proof}
Iterate from $d_0\equiv0$, which is a pseudometric, and show the property survives each step.
Suppose $d_n$ is a pseudometric. Then the lifted cost $\bar d_n$ is a pseudometric on
$\cL\times\cH_{\sqcup}$, as a non-negative combination of two. The Wasserstein lift of a
pseudometric is a pseudometric on probability measures, and its pullback through
$h\mapsto P(\cdot\mid h)$ is again one. The interface term is the pullback of $d_{\cZ}$ through
$\phi$. Their non-negative weighted sum $d_{n+1}=\mathcal Td_n$ is therefore a pseudometric. By
Theorem~\ref{thm:fixed-point} the iterates converge uniformly, and non-negativity, symmetry and the
triangle inequality are all preserved under uniform limits. Uniqueness of the fixed point gives the
restriction claim.
\end{proof}

\paragraph{From histories to systems.}
Comparing two whole systems means comparing their initial histories task by task.

\begin{definition}[Whole-system controllability]\label{def:whole-ctrl}
For a task distribution $\rho$ on $\cC$,
\begin{equation}
\Ctrl(\cS_A,\cS_B;\rho)=\E_{c\sim\rho}\Big[\W_{d_{AB}^{\dagger}}\big(\iota_A(\cdot\mid c),
\iota_B(\cdot\mid c)\big)\Big].
\label{eq:whole-ctrl}
\end{equation}
Coupling within each context prevents an easy task for one system being matched with a different
task for the other. The uniform version $\Ctrl^{\infty}$ replaces the expectation over $c$ by the
$\rho$-essential supremum.
\end{definition}

Distance zero means what it should, provided the underlying costs vanish only on genuinely
equivalent objects.

\begin{proposition}[Exact behavioural equivalence]\label{prop:zero-distance}
Let $\gamma\in(0,1)$ and $\beta\in(0,1)$. Suppose $d_{\cZ}(z,z')=0$ only when $z,z'$ are evaluation-equivalent, and $d_{\cL}(\ell,\ell')=0$
only when $\ell,\ell'$ are functionally equivalent. If $d_{AB}^{\dagger}(h_A,h_B)=0$, then the
current interface states are evaluation-equivalent, and there is a coupling of the next branches
supported only on functionally equivalent labels and on successor pairs at recursive distance zero.
\end{proposition}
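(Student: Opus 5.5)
The plan is to evaluate the fixed-point equation at the pair and read off each term. Since $d_{AB}^{\dagger}=\mathcal T_{AB}d_{AB}^{\dagger}$ (Theorem~\ref{thm:fixed-point}), we have
\begin{equation*}
0=d_{AB}^{\dagger}(h_A,h_B)=(1-\gamma)\,d_{\cZ}\big(\phi_A(h_A),\phi_B(h_B)\big)+\gamma\,\W_{\bar d^{\dagger}}\big(P_A(\cdot\mid h_A),P_B(\cdot\mid h_B)\big),
\end{equation*}
where $\bar d^{\dagger}$ is the lifted cost \eqref{eq:lifted-cost} built from $d=d_{AB}^{\dagger}$.

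Both summands are non-negative. Because $\gamma\in(0,1)$, both weights $1-\gamma$ and $\gamma$ are strictly positive. Hence each summand vanishes separately.

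\emph{The interface term.} From $d_{\cZ}(\phi_A(h_A),\phi_B(h_B))=0$ and the hypothesis on $d_{\cZ}$, the current interface states are evaluation-equivalent.

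\emph{The transport term.} The remaining summand gives $\W_{\bar d^{\dagger}}(P_A(\cdot\mid h_A),P_B(\cdot\mid h_B))=0$. By Assumption~\ref{ass:ot-regularity} this infimum is attained by some coupling $\Gamma$, so
\begin{equation*}
\int\bar d^{\dagger}\,d\Gamma=0.
\end{equation*}
The integrand is non-negative, so $\bar d^{\dagger}=0$ holds $\Gamma$-almost everywhere. Now
\begin{equation*}
\bar d^{\dagger}=\beta\,d_{\cL}+(1-\beta)\,d_{AB}^{\dagger},
\end{equation*}
and $\beta\in(0,1)$ makes both weights strictly positive. So, $\Gamma$-almost surely, $d_{\cL}(\ell_A,\ell_B)=0$ and $d_{AB}^{\dagger}(h'_A,h'_B)=0$.

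\emph{Restricting the support.} Let $N$ be the $\Gamma$-null set on which either equality fails. It is measurable, because $d_{\cL}$ is measurable and so is the fixed point. The almost-sure statement then says $\Gamma$ is concentrated on the complement of $N$, which is exactly the set of functionally equivalent labels (by the hypothesis on $d_{\cL}$) paired with successors at recursive distance zero. This is the sense of ``supported only on'' in the statement.

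\emph{Main obstacle.} The delicate step is justifying attainment of the optimal coupling. A zero infimum without attainment would only give near-optimal couplings. Here Assumption~\ref{ass:ot-regularity} supplies attainment directly, so no compactness argument is needed.

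If ``supported'' is wanted in the topological sense rather than as concentration up to a null set, one extra step is required: show the zero set is closed, for instance by lower semicontinuity of the costs, and then invoke it. I would note this as a remark rather than build it into the argument.

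The hypotheses $\gamma>0$ and $\beta<1$ are exactly what make the successor conclusion non-vacuous, since they keep the weight on $d_{AB}^{\dagger}$ strictly positive.
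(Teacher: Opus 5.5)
Your proposal is correct and follows the paper's proof essentially step for step. The positive weights $1-\gamma$ and $\gamma$ force both terms of the fixed-point equation to vanish, and attainment from Assumption~\ref{ass:ot-regularity} then gives a zero-cost coupling, concentrated where both the label cost and the successor cost are zero because $\beta,1-\beta>0$. Your extra care about the measurability of the null set and about concentration versus topological support is a harmless refinement that the paper leaves implicit.
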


\begin{proof}
Both terms on the right of \eqref{eq:ctrl-operator} are non-negative and carry positive weight,
since $1-\gamma>0$ and $\gamma>0$, so both vanish. The interface term vanishing gives
evaluation-equivalence. For the transport term, attainment (Assumption~\ref{ass:ot-regularity})
gives an optimal coupling of cost zero; since $\beta>0$ and $1-\beta>0$, that coupling must be
supported where both the label cost and the successor cost are zero.
\end{proof}

\begin{remark}[Relation to bisimulation]
When the systems share an MDP's states and actions, labels identify actions, and $d_{\cZ}$ measures
immediate reward difference, \eqref{eq:ctrl-operator} reduces to a bisimulation-style recursive
metric \cite{ferns2004metrics,ferns2011bisimulation}. Because the branch laws are those of the
systems' own policies rather than a maximum over actions, the closest analogue is the on-policy
($\pi$-)bisimulation metric \cite{castro2020scalable}. The cross-system form replaces the maximum
over shared actions with an optimal coupling over heterogeneous labelled branches.
\end{remark}

\paragraph{Equivalence of partial systems.}
The same construction applies to a window. Let $\eta_A^M,\eta_B^M$ be the entry-history laws
induced by $\rho$ and the upstream parts of $A$ and $B$.

\begin{definition}[Controllability radius and $\epsilon$-controlled equivalence]\label{def:equiv}
For $\lambda_{\mathrm{in}}\in(0,1)$,
\begin{equation}
\begin{aligned}
\Ctrl_M(A,B;\rho)=\inf_{\Lambda\in\Pi(\eta_A^M,\eta_B^M)}\int\Big[
&\lambda_{\mathrm{in}}\,d_{\cB_M}\big(\psi_A^{\mathrm{in}}(h_A),\psi_B^{\mathrm{in}}(h_B)\big)\\
&+(1-\lambda_{\mathrm{in}})\,\W_{d_M}\big(Q_A^M(\cdot\mid h_A),Q_B^M(\cdot\mid h_B)\big)
\Big]\Lambda(dh_A,dh_B).
\end{aligned}
\label{eq:partial-radius}
\end{equation}
The windows are \emph{$\epsilon$-controllably equivalent} under $\rho$ if
$\Ctrl_M(A,B;\rho)\le\epsilon$, and \emph{uniformly} so if the same holds with the essential
supremum over the optimal entry coupling in place of the integral.
\end{definition}

The infimum in \eqref{eq:partial-radius} ranges over all couplings of the entry laws, so it may pair
an entry reached on one task with an entry reached on a different task. When the comparison is meant
to be task-matched, as in Definition~\ref{def:whole-ctrl}, we use the \emph{contextwise} version
$\Ctrl_M^{\rho}$, in which $\Lambda$ is restricted to couplings that are the identity on the task
context $c\sim\rho$ and arbitrary within it. Restricting the feasible set can only raise the value,
so $\Ctrl_M\le\Ctrl_M^{\rho}$.

$\Ctrl_M$ is the smallest residual radius the declared interface, branch cost and task distribution
support; it does not claim the implementations are identical. Because it includes differences
\emph{inside} the window, it supports equivalence and substitution but is not a fairness criterion.

\begin{remark}[Fairness, equivalence and substitution]\label{rem:fairness-vs-equivalence}
Fairness requires only $\eoutn{n}\le\efairn{n}$ (Definition~\ref{def:fair-comparability}).
Approximate equivalence additionally requires a small $\Ctrl_M$, and $\epsilon$-substitutability
(Section~\ref{app:composition}) a small propagated bound \eqref{eq:window-bound}. A large controlled module effect is therefore
compatible with a perfectly fair comparison.
\end{remark}

%------------------------------------------------------------------------------------------------
\subsection{Reference specifications from human review graphs}
\label{app:reference}

\begin{definition}[Executable reference specification]
\label{def:reference-spec}
For each reference boundary state $b\in\cB^\star$, let $\cK^\star(b)$ be a nonempty set of acceptable continuation kernels over the next standardized artifact. Together with the reference graph $G^\star$, these define an \emph{executable reference specification}
$\cS^\star=(\cB^\star,\cK^\star,G^\star)$.
\end{definition}

The benchmark scores every system against a human review. A review is not one correct execution,
and treating it as one would penalise valid alternative syntheses. This section makes the reference
precise.

\paragraph{The human review graph.}
The reference artifact for scientific synthesis is neither a report string nor a trajectory, but a
structured account of what the review claims and on what evidence.

\begin{definition}[Human review graph]
A \emph{human review graph} is a typed attributed directed graph
$G^{\star}=(V^{\star},E^{\star},\tau_V,\tau_E,a_V,w_V)$. The map $\tau_V$ assigns node types in
$\{\text{topic},\text{claim},\text{evidence},\text{paper},\text{condition}\}$; $\tau_E$ assigns
relation types such as \texttt{supports}, \texttt{contradicts}, \texttt{qualifies},
\texttt{evidence-from}, \texttt{belongs-to-topic} and \texttt{valid-under-condition}; $a_V$ holds
semantic and provenance attributes; and $w_V\ge0$ holds importance weights.
\end{definition}

The graph is generally a DAG rather than a tree: one paper may support several claims, one claim may
synthesise several pieces of evidence, and one condition may qualify several claims. A topic
hierarchy or outline can remain a tree inside it.

\paragraph{Graph discrepancy.}
Let $G=(V,E)$ and $G'=(V',E')$ carry node masses $m\in\Delta^{|V|}$ and $m'\in\Delta^{|V'|}$, where
$\Delta^p$ is the probability simplex. Let $C_{ii'}$ compare node type, semantics, evidence identity
and provenance, and let $R^G_{ij}$, $R^{G'}_{i'j'}$ encode typed relational dissimilarity. A fused
discrepancy that matches nodes and relations together is
\begin{equation}
d_{\mathrm{FGW}}^2(G,G')=\inf_{\Gamma\in\Pi(m,m')}\Big\{(1-\lambda_G)\sum_{i,i'}C_{ii'}\Gamma_{ii'}
+\lambda_G\sum_{i,j,i',j'}\big|R^G_{ij}-R^{G'}_{i'j'}\big|^2\Gamma_{ii'}\Gamma_{jj'}\Big\},
\label{eq:fgw}
\end{equation}
with $\lambda_G\in[0,1]$. This is the fused Gromov--Wasserstein objective of
\cite{vayer2019fgw}, with a linear feature term and a squared structure term. The square on the
left is notational: the right-hand side is not itself the square of a metric, and
$d_{\mathrm{FGW}}$ is a metric only under the conditions stated there, satisfying in general a
relaxed triangle inequality. Paper identifiers and evidence spans can act as hard anchors, while claim
and topic nodes are aligned semantically; null nodes or unbalanced transport represent omitted or
novel material. The choice of metric must be validated against expert judgement. \eqref{eq:fgw} is one explicit instantiation, not a claim that it captures all graph semantics.

\paragraph{Set-valued reference behaviour.}
A human review specifies acceptable synthesis only partially, so the reference is a \emph{set} of
acceptable continuations --- the executable reference specification
$\cS^{\star}=(\cB^{\star},\cK^{\star},G^{\star})$ of Definition~\ref{def:reference-spec}. The set
$\cK^{\star}(b)$ can admit alternative searches, paper sets, taxonomies and organisations while
still enforcing evidence support, scope, contradiction handling and provenance.

To measure a system against a set, compare it with the closest acceptable member. Let $\nu^{\star}$
be a reference boundary-state distribution with $\supp(\nu^{\star})=\cB^{\star}$, a non-empty compact
subset of a common boundary space $(\cB,d_{\cB})$, and fix a measurable nearest-reference selector
\begin{equation}
a^{\star}(b)\in\arg\min_{b^{\star}\in\cB^{\star}}d_{\cB}(b,b^{\star}),
\label{eq:reference-selector}
\end{equation}
whose existence is a modelling assumption that holds under standard compactness and measurability
conditions. Kernels at $b$ and at $a^{\star}(b)$ produce artifacts in a common output space
$(\cZ_{\mathrm{out}},d_{\mathrm{out}})$. For a system with continuation kernel $Q(\cdot\mid b)$, its
\emph{reference error} at $b\in\cB$ is
\begin{equation}
e(b)=\inf_{Q^{\star}\in\cK^{\star}(a^{\star}(b))}\W_{d_{\mathrm{out}}}\big(Q(\cdot\mid b),Q^{\star}\big).
\label{eq:reference-error}
\end{equation}
Likewise, $\Ctrl(\cS,\cS^{\star};\rho)$ for a set-valued reference means the infimum of
\eqref{eq:whole-ctrl} over all measurable executable selections $Q^{\star}(\cdot\mid b)\in
\cK^{\star}(b)$. Both compare a system with the nearest acceptable behaviour rather than with one
arbitrarily chosen human trajectory.

\begin{remark}[What the reference does not specify]
The review graph does not normally identify private reasoning or one exact tool-call sequence.
Claims about internal process equality are therefore valid only at declared observable interfaces.
Evaluating unrestricted chain-of-thought would need further assumptions, and the theory does not
require it.
\end{remark}

%================================================================================================
\section{Campaign validation}
\label{app:validation}

Table~\ref{tab:biolitbench} summarizes BioLitBench.

\begin{table}[H]
\centering
\small
\caption{\textbf{BioLitBench at a glance.} Statistics for the $2{,}042$ usable graphs.}
\label{tab:biolitbench}
\begin{tabular}{@{}lr@{\qquad}lr@{}}
\toprule
Articles & $2{,}042$ &
Fields / subfields / topics & $6$ / $48$ / $230$ \\
Claims & $246{,}102$ &
Median claims per article & $108$ \\
Typed relations & $832{,}487$ &
Hard negatives & $103{,}242$ \\
Unique cited papers & $224{,}140$ &
Review publication years & $2016$--$2025$ \\
Training / held-out articles & $1{,}842$ / $200$ &
Evaluation tasks & $50$ \\
\bottomrule
\end{tabular}
\end{table}

\paragraph{SCRIBE.}
SCRIBE's backbone is a frozen $27$B open model (Qwen3.8-27B, bf16, greedy decoding at seed $0$,
$65{,}536$-token context for generation), and its trained object is a KV-Skill carrier of $14$--$16$M parameters per trained stage,
served through a patched inference server. It exposes three windows --- synthesis, planning
and writing --- each emitting its exit artifact. Its entry is sealed before the first model call,
and a check at the start of synthesis fails the run if any retrieval capability is still reachable,
so the absence of search downstream holds by construction. Each window appends a receipt whose exit
hash is the next window's entry hash, so a claim about a run can be checked against the run itself.
The trained SCRIBE runs the current pipeline. With the same pipeline and its initial carriers, the composite is $0.595$ under fixed input and $0.494$ under same pool, so the planning carrier's paired gain is $+0.011$ and $-0.010$, neither certified; SCRIBE (untrained) in the tables is the earlier pipeline.

\paragraph{SCRIBE's retrieval harness.}
Under same pool SCRIBE retrieves in two stages. The model first decomposes the question into
subtopics, proposes queries under a duplicate guard and ranks the candidates. A selection rule then
fills the bundle from the top $1{,}000$ ranked papers in two-year windows anchored at the cutoff,
ordering each window by in-pool citation count with citers strictly before the cutoff and the
evaluated review removed. That rule reads the corpus citation graph directly, so it is part of the
system rather than a tool call, and about $40\,\%$ of the final bundle is chosen by it rather than by
the model's own ranking. The per-task search, row and document budgets are each the largest that any
baseline spent on that task. The harness was selected on screening and held-out tasks disjoint from
the $50$ evaluation tasks. The earlier harness, without the selection rule, is the SCRIBE row of
Table~\ref{tab:capability-pool}; the leaderboard (Section~\ref{sec:leaderboard}) uses the final one.

\paragraph{Entry disclosures.}
Three facts bear on the same-pool table (Table~\ref{tab:leaderboard-pool}), and the fixed-input table is untouched by all of them.
(i) The pool service the published pipelines ran exposes a citation route on which the evaluated
review's own reference list can be requested; SurveyGen-I requested it on $33$ of $50$ tasks,
and its paired margins are the same on the $17$ tasks where it did not (e.g.\ $+0.080$ and
$+0.081$ over LLM$\times$MR), so the route does not produce its position.
(ii) SCRIBE is restricted by a route guard to plain search and title, abstract and year, and the
commercial agents' tools exclude the evaluated review. (iii) A SCRIBE
acquisition that stops at its document budget is scored as a normal exit when its bundle is sealed,
its hashes match, its audits are clean and all three downstream windows succeed; the published
pipelines' own budget stops are likewise recorded as normal.

\paragraph{Report length.}
Report length is a genuine confounder: the primary view holds every report to the task's target
length, and on entry-conditioned synthesis the full-length view reverses the conclusion, certifying
a pipeline whose reports have a median of about $250{,}000$ words over SCRIBE's reports of under
$5{,}000$. All analysis constants are recorded with input hashes, so a reader can rerun the analysis
and see what changing them does.

\paragraph{Assertions.}
Twenty-nine assertions run with every rebuild and all pass; four carry most of the weight. The
table reproduces the previous analysis cell by cell ($288$ cells,
$\max|\Delta|=1.11\times10^{-16}$). A replicate null --- three seeds of one model against each
other, where nothing should separate --- certifies $0$ of $114$ cells, the control showing the
threshold has not been loosened. Entry byte-equality is re-derived rather than assumed and holds on
$50$ of $50$ tasks, without which the fixed-input certifications would not stand. And rescoring the
reference-fed reports on an independent implementation differs on $4$ of $81{,}575$ cells, all on
readouts the admission rule already excludes.

\section{Further leaderboard results}
\label{app:leaderboard}

This appendix gives the same-pool capability scores behind Figure~\ref{fig:leaderboard}(b)
(Table~\ref{tab:leaderboard-pool}) and the leaderboard at full length.

\begin{table}[H]
\centering\footnotesize
\setlength{\tabcolsep}{3pt}
\caption{\textbf{Capability scores under same pool} at the primary observation point. Conventions as in Table~\ref{tab:leaderboard}.}
\label{tab:leaderboard-pool}

% BEGIN tab:leaderboard (b)
% generated by ccbench/quick/make_results_figs.py -- do not edit by hand
\setlength{\tabcolsep}{2.0pt}
\begin{tabular}{@{}p{58pt}p{76pt}Y{41pt}Y{24pt}Y{27.0pt}Y{27.0pt}Y{27.0pt}Y{27.0pt}Y{27.0pt}Y{27.0pt}@{}}
\toprule
& & & & \multicolumn{6}{c}{Capability (fraction of human level)}\\
\cmidrule(l){5-10}
Backbone & System & Composite & Rank & \rot{Retrieval} & \rot{Synthesis} & \rot{Reasoning} & \rot{Planning} & \rot{Writing} & \rot{Form} \\
\midrule
\multicolumn{10}{c}{\emph{Same pool: every system searches one frozen corpus}}\\
\midrule
\rowcolor{oursrow} \cellcolor{white}Qwen3.8-27B & SCRIBE & \textbf{0.487} & $1$--$7$ & \underline{0.335} & 0.523 & 0.545 & 0.250 & 0.446 & 0.826 \\
\rowcolor{oursrow}  & SCRIBE (untrained) & 0.464 & $1$--$9$ & \underline{0.335} & 0.512 & 0.538 & 0.232 & 0.404 & 0.766 \\
\arrayrulecolor{black!25}\midrule\arrayrulecolor{black}
\rowcolor{oursrow} \cellcolor{white}gpt-5.6-luna & SCRIBE-Luna & \underline{0.479} & $1$--$6$ & \underline{0.335} & 0.577 & \textbf{0.550} & 0.185 & 0.386 & \underline{0.842} \\
 & SurveyGen-I & 0.460 & $1$--$9$ & \textbf{0.336} & 0.535 & 0.485 & 0.181 & 0.371 & \textbf{0.850} \\
 & OpenAI tool loop & 0.437 & $3$--$10$ & 0.172 & \textbf{0.665} & 0.544 & 0.126 & 0.425 & 0.689 \\
 & SurveyForge & 0.431 & $3$--$10$ & 0.220 & 0.529 & 0.439 & \underline{0.255} & 0.382 & 0.760 \\
 & SurveyG & 0.425 & $6$--$10$ & 0.155 & 0.487 & 0.486 & \textbf{0.284} & \underline{0.451} & 0.686 \\
 & AutoSurvey & 0.419 & $5$--$11$ & 0.238 & 0.540 & 0.444 & 0.117 & 0.411 & 0.766 \\
 & LLM$\times$MR & 0.379 & $11$--$12$ & 0.077 & 0.504 & 0.445 & 0.175 & 0.403 & 0.671 \\
\arrayrulecolor{black!25}\midrule\arrayrulecolor{black}
claude-sonnet-5 & Claude Code & 0.458 & $1$--$8$ & 0.227 & 0.573 & 0.520 & 0.188 & \textbf{0.467} & 0.775 \\
 & Claude Science & 0.456 & $1$--$8$ & 0.287 & \underline{0.598} & \underline{0.546} & 0.164 & 0.375 & 0.767 \\
\bottomrule
\end{tabular}
% END tab:leaderboard (b)
\end{table}

\paragraph{The leaderboard at full length.}
Table~\ref{tab:leaderboard-full} reads the leaderboard's systems at full report length, the
secondary and descriptive view; conventions as in Table~\ref{tab:leaderboard}. The top of the
fixed-input table is unchanged, while AutoSurvey --- a median of about $250{,}000$ words per task ---
rises to third, because the full-length composite rewards material that longer reports contain more
of. Under same pool the certified intervals widen and nothing is pinned.

\begin{table}[H]
\centering\footnotesize
\setlength{\tabcolsep}{3pt}
\caption{\textbf{The leaderboard at full length} (secondary, descriptive). Conventions as in
Tables~\ref{tab:leaderboard} and~\ref{tab:leaderboard-pool}; the footnotes of Table~\ref{tab:leaderboard}
apply to panel (a).}
\label{tab:leaderboard-full}
% BEGIN tab:leaderboard-full (a)
% generated by ccbench/quick/make_results_figs.py -- do not edit by hand
\setlength{\tabcolsep}{2.0pt}
\begin{tabular}{@{}p{58pt}p{76pt}Y{41pt}Y{24pt}Y{33.2pt}Y{33.2pt}Y{33.2pt}Y{33.2pt}Y{33.2pt}@{}}
\toprule
& & & & \multicolumn{5}{c}{Capability (fraction of human level)}\\
\cmidrule(l){5-9}
Backbone & System & Composite & Rank & \rot{Synthesis} & \rot{Reasoning} & \rot{Planning} & \rot{Writing} & \rot{Form} \\
\midrule
\multicolumn{9}{c}{\emph{(a) Fixed input: every system receives the review's reference list}}\\
\midrule
\rowcolor{oursrow} \cellcolor{white}Qwen3.8-27B & SCRIBE & \textbf{0.609} & $1$--$4$ & 0.695 & 0.596 & \textbf{0.319} & \textbf{0.619} & \textbf{0.816} \\
\rowcolor{oursrow}  & SCRIBE (untrained) & \underline{0.585} & $1$--$6$ & 0.690 & 0.603 & \underline{0.276} & 0.569 & \underline{0.787} \\
\arrayrulecolor{black!25}\midrule\arrayrulecolor{black}
\rowcolor{oursrow} \cellcolor{white}gpt-5.6-luna & SCRIBE-Luna & 0.573 & $2$--$8$ & 0.710 & 0.622 & 0.177 & 0.592 & 0.763 \\
 & AutoSurvey & 0.567 & $1$--$8$ & \textbf{0.958} & \textbf{0.769} & 0.135 & 0.370 & 0.605 \\
 & SurveyG & 0.530 & $6$--$12$ & 0.705 & 0.640 & 0.272 & 0.353 & 0.678 \\
 & SurveyGen-I & 0.520 & $6$--$13$ & 0.695 & 0.580 & 0.182 & 0.415 & 0.729 \\
 & OpenAI tool loop & 0.515 & $7$--$13$ & 0.741 & 0.552 & 0.136 & 0.487 & 0.660 \\
 & LiRA & 0.499 & $5$--$13$ & \underline{0.755} & 0.634 & 0.254 & 0.281 & 0.594 \\
 & LLM$\times$MR & 0.491 & $10$--$13$ & 0.749 & \underline{0.663} & 0.182 & 0.280 & 0.578 \\
\arrayrulecolor{black!25}\midrule\arrayrulecolor{black}
claude-sonnet-5 & Claude Science & 0.546 & $3$--$12$ & 0.702 & 0.604 & 0.160 & 0.526 & 0.736 \\
 & Claude Code & 0.535 & $5$--$12$ & 0.650 & 0.547 & 0.185 & 0.588 & 0.703 \\
\arrayrulecolor{black!25}\midrule\arrayrulecolor{black}
proprietary & Elicit SR$^{\S}$ & 0.555 & $2$--$11$ & 0.709 & 0.542 & 0.185 & \underline{0.610} & 0.731 \\
\bottomrule
\end{tabular}
% END tab:leaderboard-full (a)

\vspace{8pt}
% BEGIN tab:leaderboard-full (b)
% generated by ccbench/quick/make_results_figs.py -- do not edit by hand
\setlength{\tabcolsep}{2.0pt}
\begin{tabular}{@{}p{58pt}p{76pt}Y{41pt}Y{24pt}Y{27.0pt}Y{27.0pt}Y{27.0pt}Y{27.0pt}Y{27.0pt}Y{27.0pt}@{}}
\toprule
& & & & \multicolumn{6}{c}{Capability (fraction of human level)}\\
\cmidrule(l){5-10}
Backbone & System & Composite & Rank & \rot{Retrieval} & \rot{Synthesis} & \rot{Reasoning} & \rot{Planning} & \rot{Writing} & \rot{Form} \\
\midrule
\multicolumn{10}{c}{\emph{(b) Same pool: every system searches one frozen corpus}}\\
\midrule
\rowcolor{oursrow} \cellcolor{white}Qwen3.8-27B & SCRIBE & \textbf{0.506} & $1$--$8$ & \underline{0.335} & 0.570 & 0.574 & 0.250 & \textbf{0.468} & \textbf{0.841} \\
\rowcolor{oursrow}  & SCRIBE (untrained) & 0.477 & $1$--$10$ & \underline{0.335} & 0.579 & 0.578 & 0.232 & 0.353 & 0.787 \\
\arrayrulecolor{black!25}\midrule\arrayrulecolor{black}
gpt-5.6-luna & AutoSurvey & \underline{0.490} & $1$--$8$ & 0.238 & \textbf{0.906} & \textbf{0.746} & 0.117 & 0.257 & 0.677 \\
\rowcolor{oursrow}  & SCRIBE-Luna & 0.489 & $1$--$8$ & \underline{0.335} & 0.599 & 0.561 & 0.185 & 0.423 & 0.832 \\
 & SurveyForge & 0.476 & $1$--$8$ & 0.220 & \underline{0.744} & 0.618 & \underline{0.255} & 0.274 & 0.746 \\
 & SurveyGen-I & 0.473 & $1$--$9$ & \textbf{0.336} & 0.655 & 0.547 & 0.181 & 0.283 & \underline{0.835} \\
 & OpenAI tool loop & 0.438 & $6$--$11$ & 0.172 & 0.668 & 0.545 & 0.126 & 0.427 & 0.689 \\
 & SurveyG & 0.434 & $5$--$11$ & 0.155 & 0.621 & 0.597 & \textbf{0.284} & 0.278 & 0.670 \\
 & LLM$\times$MR & 0.404 & $9$--$12$ & 0.077 & 0.708 & \underline{0.635} & 0.175 & 0.249 & 0.579 \\
\arrayrulecolor{black!25}\midrule\arrayrulecolor{black}
claude-sonnet-5 & Claude Science & 0.455 & $1$--$12$ & 0.281 & 0.601 & 0.550 & 0.164 & 0.416 & 0.770 \\
 & Claude Code & 0.449 & $1$--$12$ & 0.222 & 0.573 & 0.520 & 0.188 & \underline{0.467} & 0.775 \\
\bottomrule
\end{tabular}
% END tab:leaderboard-full (b)
\end{table}

\paragraph{Commercial systems.}
Claude Science is Anthropic's research workbench and Claude Code is Anthropic's coding agent run
as a tool loop at low effort; both use claude-sonnet-5 and reach the corpus only through the tools we
provide. The OpenAI tool loop is gpt-5.6-luna called through the OpenAI Responses API with the same
tools. Under fixed input these tools return only the task's reference list. Elicit Systematic Review uses its own model, receives the same reference list, reads full text
where the other systems read abstracts, and cites at most $80$ papers; all of its citation marks
resolve. Those are properties of the system and sit inside the system window, so they are declared
rather than refused.

\section{Readout validity under evidence ablation}
\label{app:readout-validity}

The admission screen asks whether a readout carries topical signal: scoring a system against an
unrelated review must do worse than against its own. It does not ask whether a readout responds to
the \emph{evidence} a system was given. We test that on the controlled writing window, where six
published pipelines were each handed the same sealed entry (papers, outline and leaf allocation) on
$20$ tasks, one per subfield cluster. The same driver, AutoSurvey's writer, was then run with no papers, with another section's
papers, and with half of each section's papers.

Given the same driver, a writer with no papers scores \emph{higher} than one given them (Table~\ref{tab:ablation}) on form
($-0.114$ for with-evidence minus no-evidence, 95\% CI $[-0.170,-0.059]$), reasoning ($-0.061$,
$[-0.110,-0.013]$) and synthesis ($-0.044$, $[-0.087,-0.002]$), and lower on writing ($+0.071$,
$[+0.040,+0.102]$). Averaged over the six pipelines, the only readout that separates full evidence
from both the empty and the shuffled controls is evidence agreement ($+0.145$ and $+0.078$). The
content readouts therefore measure agreement with the human review's claims, relations and
organisation, which a strong backbone partly supplies from its own knowledge. That is the reading
the leaderboard adopts: its relations certify content agreement with the human review, the ground
truth of BioLitBench, and not the use of the evidence provided. An evidence-ablation control belongs
in any admission screen that intends the stronger reading.

\begin{table}[H]
\centering\small
\caption{\textbf{Evidence ablation at the controlled writing window} ($20$ tasks, matched length,
fraction-of-human-level scale). Each entry is the paired difference between the writer given the
full evidence and a control; the 95\% interval is below it. \emph{Same driver}: AutoSurvey's own
writer against its controls. \emph{Pipeline mean}: the six pipelines' mean against the same
controls. A negative value means the control scores higher.}
\label{tab:ablation}
\setlength{\tabcolsep}{5pt}
\begin{tabular}{@{}l ccc cc@{}}
\toprule
& \multicolumn{3}{c}{same driver} & \multicolumn{2}{c}{pipeline mean} \\
\cmidrule(lr){2-4}\cmidrule(lr){5-6}
Axis & no evidence & shuffled & half & no evidence & shuffled \\
\midrule
synthesis & $-$0.044 & $-$0.036 & $-$0.019 & $-$0.053 & $-$0.045 \\
 & {\scriptsize\color{black!60}[$-$.087, $-$.002]} & {\scriptsize\color{black!60}[$-$.075, $+$.003]} & {\scriptsize\color{black!60}[$-$.056, $+$.018]} & {\scriptsize\color{black!60}[$-$.088, $-$.018]} & {\scriptsize\color{black!60}[$-$.082, $-$.008]} \\[2pt]
reasoning & $-$0.061 & $-$0.035 & $-$0.027 & $-$0.081 & $-$0.055 \\
 & {\scriptsize\color{black!60}[$-$.110, $-$.013]} & {\scriptsize\color{black!60}[$-$.100, $+$.029]} & {\scriptsize\color{black!60}[$-$.080, $+$.027]} & {\scriptsize\color{black!60}[$-$.119, $-$.042]} & {\scriptsize\color{black!60}[$-$.109, $-$.000]} \\[2pt]
writing & $+$0.071 & $+$0.011 & $-$0.001 & $+$0.094 & $+$0.034 \\
 & {\scriptsize\color{black!60}[$+$.040, $+$.102]} & {\scriptsize\color{black!60}[$-$.027, $+$.048]} & {\scriptsize\color{black!60}[$-$.023, $+$.021]} & {\scriptsize\color{black!60}[$+$.067, $+$.122]} & {\scriptsize\color{black!60}[$+$.005, $+$.063]} \\[2pt]
form & $-$0.114 & $+$0.022 & $+$0.019 & $-$0.115 & $+$0.022 \\
 & {\scriptsize\color{black!60}[$-$.170, $-$.059]} & {\scriptsize\color{black!60}[$-$.024, $+$.068]} & {\scriptsize\color{black!60}[$+$.001, $+$.037]} & {\scriptsize\color{black!60}[$-$.170, $-$.059]} & {\scriptsize\color{black!60}[$-$.019, $+$.062]} \\[2pt]
\bottomrule
\end{tabular}
\end{table}

Scores do not depend significantly on the target review's publication year (pooled slope $-0.007$
per year, $p=0.15$--$0.17$), and the content axes do not rise with a review's age more than planning,
writing or retrieval ($p\ge0.34$). Publication year is only a proxy for a review's exposure during
pretraining.

\section{The capability campaign and the published pipelines}
\label{app:campaign}
\label{sec:campaign}

The leaderboard ranks systems by one composite. This appendix opens the same kind of comparison
capability by capability, on a smaller roster decided as its own family: the six fixed-input
systems of Table~\ref{tab:capability}, and seven same-pool systems in which SCRIBE runs its earlier
retrieval harness (Table~\ref{tab:capability-pool}); our system in this appendix is SCRIBE (untrained). Its rank intervals are therefore not those of
Tables~\ref{tab:leaderboard} and~\ref{tab:leaderboard-pool}.

\paragraph{Given a shared entry, the pipelines separate --- on some capabilities and not others.}
Under fixed input the ten admissible pairs across five axes give $50$ cells, of which $18$ certify:
five on synthesis, five on planning, three on writing, three on form and two on reasoning. There is
no retrieval axis here, because every arm was handed its evidence. The confirmatory family decides
$5$ of the $10$ pairs. Four involve SCRIBE, which is certified above each of its four comparable
opponents ($\widehat\Delta$ between $0.076$ and $0.110$ against $q$ between $0.037$ and $0.050$,
$n=50$); the fifth is SurveyG above LLM$\times$MR ($+0.035$ against $q=0.028$). Under same pool the seven-system roster gives $21$ pairs over six axes, $126$
cells, of which $32$ certify --- eleven on retrieval, eight on planning, eight on form and five on
synthesis. Reasoning and writing certify \emph{nothing at all}: once reports are held to a common
length, no two of these systems can be told apart on either, at this sample size.

\begin{table}[H]
\centering\small
\caption{\textbf{Capability scores as a fraction of human level} under \emph{fixed input}, where
every arm receives the task's reference bundle, with certified rank intervals in brackets --- the
positions consistent with every certified relation. LiRA receives the review's bibliography in its own input
format rather than through the shared allowlist, so this campaign does not pair it with the other arms,
and its whole row is unpinned. The
same-pool scores are in Table~\ref{tab:capability-pool}. Scores come from this campaign's own
per-axis scorer; SCRIBE (untrained)'s form score therefore reads $0.783$ here and $0.786$ on the merged board of
Table~\ref{tab:leaderboard}.}
\label{tab:capability}
\begin{tabular}{@{}lccccc@{}}
\toprule
\multicolumn{6}{@{}l}{\emph{fixed input} (reference bundle; no retrieval axis)}\\
System & synthesis & reasoning & planning & writing & form \\
\midrule
SCRIBE (untrained) & \textbf{0.686} [1,3] & \textbf{0.601} [1,4] & \textbf{0.276} [1,4] & \textbf{0.565} [1,4] & \textbf{0.783} [1,3]\\
AutoSurvey & 0.638 [1,5] & 0.533 [1,6] & 0.135 [3,6] & 0.483 [1,6] & 0.687 [2,6]\\
SurveyGen-I & 0.591 [2,5] & 0.510 [2,6] & 0.182 [3,6] & 0.449 [2,6] & 0.746 [1,6]\\
SurveyG & 0.568 [2,6] & 0.541 [1,6] & 0.272 [1,3] & 0.509 [1,5] & 0.689 [2,6]\\
LiRA & 0.539 [1,6] & 0.495 [1,6] & 0.254 [1,6] & 0.442 [1,6] & 0.716 [1,6]\\
LLM$\times$MR & 0.537 [4,6] & 0.515 [2,6] & 0.182 [2,6] & 0.446 [3,6] & 0.666 [2,6]\\
\bottomrule
\end{tabular}
\end{table}

\begin{table}[H]
\centering\small
\caption{\textbf{Capability scores under \emph{same pool}}, where every system retrieves for itself
from one frozen corpus and the retrieval axis becomes observable. Conventions as in
Table~\ref{tab:capability}. Writing, which certifies no pair, is not shown. The SCRIBE row uses its
earlier retrieval harness; the final harness,
used in the leaderboard, reaches retrieval $0.335$ (Table~\ref{tab:leaderboard-pool}).}
\label{tab:capability-pool}
\begin{tabular}{@{}lccccc@{}}
\toprule
System & synthesis & reasoning & planning & retrieval & form \\
\midrule
SurveyGen-I & 0.535 [1,6] & 0.485 [1,7] & 0.181 [3,6] & \textbf{0.336} [1,1] & \textbf{0.850} [1,4]\\
AutoSurvey & 0.540 [1,6] & 0.444 [1,7] & 0.117 [6,7] & 0.238 [2,3] & 0.766 [1,5]\\
SurveyForge & 0.529 [1,6] & 0.439 [1,7] & 0.255 [1,5] & 0.220 [2,4] & 0.760 [1,5]\\
SCRIBE (untrained, earlier harness) & 0.496 [1,6] & 0.480 [1,7] & 0.257 [1,6] & 0.156 [3,6] & 0.786 [1,6]\\
SurveyG & 0.487 [1,6] & 0.486 [1,7] & \textbf{0.284} [1,4] & 0.155 [4,5] & 0.686 [4,7]\\
DR-Tulu & 0.369 [6,7] & 0.419 [1,7] & 0.251 [1,6] & 0.084 [5,7] & 0.663 [3,7]\\
LLM$\times$MR & 0.504 [1,7] & 0.445 [1,7] & 0.175 [2,7] & 0.077 [6,7] & 0.671 [4,7]\\
\bottomrule
\end{tabular}
\end{table}

\paragraph{The two entry conditions diagnose a system rather than rank it.}
Reading Tables~\ref{tab:capability} and~\ref{tab:capability-pool} together is what the framework is
for. Given the same papers, SCRIBE is first or joint-first on every axis it is comparable on and no
baseline is certified above it anywhere; retrieving for itself with its earlier harness, its one
certified deficit was retrieval. The mechanism is in the artifacts: that harness recovered $50$
papers per task against $109$ under fixed input, its claim count fell from $118$ to $52$ and its
reports from $3{,}947$ to $2{,}644$ words. Everything downstream was working on a smaller bundle. A
single leaderboard number would have called the system mid-field and said nothing about where to
look; the per-window diagnosis located the deficit, and the final retrieval harness closed it
(retrieval $0.335$ against SurveyGen-I's $0.336$, Table~\ref{tab:leaderboard-pool}).

\paragraph{Refinement trades content for form.}
Scoring the draft and the final exit separately makes the refinement pass its own comparison, with
entries identical by construction. Five refinement effects certify and three are \emph{negative},
all on synthesis --- pipelines deleting human claims their own drafts had covered ($-0.072$,
$-0.113$, $-0.076$ against $q$ of $0.035$, $0.100$, $0.051$) --- against two positive effects, on
writing ($+0.068$) and form ($+0.115$). The same pass improves the prose and loses the content,
exactly the conflation that reading only a final report cannot detect. Across the panel, on the four axes with a draft exit
(all but planning), $10$ of the $13$ pair-by-axis relations certified at the final exit under fixed input are already present at the
draft exit, and $21$ of $24$ under same pool: most of what separates these systems is settled before refinement
begins.

\paragraph{A capability visible only at its own exit.}
Allocation --- which papers are handed to the writer for each section --- exists only at its own
exit. A human places a paper in $0.14$ of the reference's top-level sections; pipelines that
broadcast their whole evidence set to nearly every section sit between $0.94$ and $1.00$.
Broadcasting makes raw co-location recall $1.000$ by construction, carrying the broadcaster on one
task to an $F_1$ of $0.800$ against $0.286$ for a system genuinely allocating. Chance correction
removes that expectation and returns both to near zero, which is why the adjusted Rand index is the
only certifying readout here. Measured that way every system is far below human, SCRIBE at
$0.152$--$0.174$ against $0.099$ for the best baseline.

\paragraph{The published pipelines alone.}
Table~\ref{tab:certified} gives the certified tiers of the published pipelines with the backbone
held fixed, as discussed in Section~\ref{sec:framework-results}. Under a sign-flip null the
per-pair false-certification rate is $0.0035$--$0.0047$ against a nominal $0.005$, and leaving out
one subfield gives a mean Jaccard overlap of $0.994$ with no winner flips.

\begin{table}[H]
\centering\small
\caption{\textbf{The published pipelines alone} (scaffold family, $10$ pairs per tier; compare
Tables~\ref{tab:leaderboard} and~\ref{tab:leaderboard-pool}, whose larger families give wider intervals). \emph{Display rank} orders point estimates and is not
a claim; \emph{certified rank} is the interval of positions consistent with the certified
relations over every linear extension of their transitive closure, so a single value is a position
the data pin down. In the reference-fed tier, each pipeline runs its reference-fed entry. Composite is
the equal-weight mean over the tier's capability axes on the fraction-of-human-level scale, at
matched length. The last column gives the certified rank at full length.}
\label{tab:certified}
\begin{tabular}{@{}llccc@{}}
\toprule
& & & \multicolumn{2}{c}{Certified rank} \\
\cmidrule(l){4-5}
Tier & System & Composite & matched & full length \\
\midrule
self-retrieving & SurveyGen-I & 0.460 & $\mathbf{1}$ & $1$--$3$ \\
(certified: 7 / 7 of 10) & SurveyForge & 0.431 & $2$--$4$ & $1$--$3$ \\
 & SurveyG & 0.425 & $2$--$4$ & $\mathbf{4}$ \\
 & AutoSurvey & 0.419 & $2$--$4$ & $1$--$3$ \\
 & LLM$\times$MR & 0.379 & $\mathbf{5}$ & $\mathbf{5}$ \\
\midrule
reference-fed & SurveyG & 0.516 & $1$--$4$ & $2$--$4$ \\
(certified: 1 / 6 of 10) & SurveyGen-I & 0.496 & $1$--$5$ & $2$--$4$ \\
 & AutoSurvey & 0.495 & $1$--$5$ & $\mathbf{1}$ \\
 & LiRA & 0.484 & $1$--$5$ & $2$--$5$ \\
 & LLM$\times$MR & 0.469 & $2$--$5$ & $4$--$5$ \\
\bottomrule
\end{tabular}
\end{table}

\section{Theory checks in detail}
\label{app:theory-checks}

Table~\ref{tab:theory-checks} summarizes the tests of the theory. The subsections below give
each test in detail.

\begin{table}[H]
\centering\footnotesize
\setlength{\tabcolsep}{4pt}
\caption{\textbf{Verification summary.} Each part of the theory, what it is tested on, and the key
result. The subsection with the full test is given in parentheses.}
\label{tab:theory-checks}
\begin{tabularx}{\textwidth}{@{}>{\raggedright\arraybackslash}p{3.3cm} >{\raggedright\arraybackslash}p{3.2cm} L >{\raggedright\arraybackslash}p{2.3cm}@{}}
\toprule
Component & Tested on & Key result & Verdict \\
\midrule
\multicolumn{4}{@{}l}{\emph{On the published systems}}\\[1pt]
Attribution bound, Thm.~\ref{thm:observable-attribution} {\scriptsize(\ref{app:tc-two-entries})} & two entries per pipeline, $118$ runs & $L_M$ transfers to held-out pipelines (coverage $0.79$--$1.00$) & $\checkmark$ holds \\[2pt]
Bound at a stage window {\scriptsize(\ref{app:tc-w4})} & one sealed writing input, $20$ tasks; fit on $5$ pipelines, check on $15$ pairs of $6$ & $420/420$ cells hold at matched length; $44\%$ vacuous & $\checkmark$ holds; informative on writing \\[2pt]
Admissibility, Def.~\ref{def:fair-comparability} {\scriptsize(\ref{app:tc-refusals})} & five LiRA pairs, measured entry mismatch & radius $0.70$--$0.73$ against tolerance $0.21$; refused on $15/15$ readouts & $\checkmark$ refuses \\[2pt]
Non-expansive windows, Cor.~\ref{cor:nonexpansive} {\scriptsize(\ref{app:tc-two-entries}, \ref{app:tc-w4})} & both designs above & $0$ runs expand at matched length; $62/100$ at full length & $\checkmark$ at matched length \\[2pt]
Certification rule, Eq.~\ref{eq:decision} {\scriptsize(\ref{app:tc-calibration})} & sign-flip null; leave one subfield out & false certification $0.035$ / $0.062$; no flips & $\checkmark$ near nominal \\[2pt]
Readout validity {\scriptsize(\ref{app:readout-validity})} & evidence ablation, writing stage & no-evidence writer scores higher on synthesis, reasoning, form & $\sim$ content agreement \\[2pt]
Comparability vs.\ equivalence, Def.~\ref{def:equiv} {\scriptsize(\ref{app:tc-ctrl})} & $\Ctrl_M$ on reference-fed pairs & a fair pair: $0.07$ at retrieval, $0.51$ at writing & $\checkmark$ distinct \\
\midrule
\multicolumn{4}{@{}l}{\emph{Synthetic chains or implementation}}\\[1pt]
Composition, Thm.~\ref{thm:composition} {\scriptsize(\ref{app:tc-composition})} & synthetic four-stage chains & holds on $95$--$100\%$ of chains & $\checkmark$ synthetic only \\[2pt]
Fixed point, Thm.~\ref{thm:fixed-point} {\scriptsize(\ref{app:tc-fixed})} & empirical kernels & contraction $0.449\le0.45$; unique & $\circ$ implementation \\
\bottomrule
\end{tabularx}

\vspace{2pt}
\parbox{\textwidth}{\scriptsize $\checkmark$ holds as stated; $\sim$ holds with the scope given;
$\circ$ implementation check only. False certification: same pool / fixed input.}
\end{table}

\subsection{The attribution bound on two entries}
\label{app:tc-two-entries}

Each of four published pipelines ran twice on every task it completed under both entries: searching the frozen corpus, and with the
corpus restricted to the review's reference bundle. For each task, $x$ is the entry mismatch of the
self-retrieving run to the reference bundle, and $y$ is the distance between the two runs' writing
exits. This gives $118$ paired runs with $x\in[0.74,1.00]$. The selected estimator fits the
module-sensitivity constant as the $95$th percentile of $y/x$, giving $L_M=0.890$.
Table~\ref{tab:lmtests} reports the tests that could fail. The leave-one-pipeline-out test checks
transfer, because the panel applies one constant to every pipeline.
Non-expansiveness is falsifiable: the same estimator returns $1.74$ on the writing stage of the
\emph{development instrument}, an in-house four-stage agent used only as a measuring instrument,
whose stages can be entered from chosen inputs and so probed at smaller $x$. None of the $118$ runs
expands. Table~\ref{tab:designs} gives the estimator designs that were
compared; the fitted constant is a $95$th-percentile ratio, so the radius it gives is a
high-probability envelope rather than a worst-case bound.

\begin{table}[H]
\centering\small
\caption{\textbf{The module-sensitivity constant $L_M$, measured on the panel.} Fitted on the paired gap over $118$ paired runs from four pipelines, $x\in[0.74,1.00]$.}
\label{tab:designs}
\setlength{\tabcolsep}{6pt}
\begin{tabular}{@{}l cc l@{}}
\toprule
$L_M$ design & $L_M$ & $\xi$ & note \\
\midrule
\textbf{ratio $q_{95}$} & \textbf{0.890} & \textbf{0.000} & \textbf{selected design} \\
ratio sup & 0.967 & 0.000 & supremum form \\
affine    & 0.803 & 0.085 & envelope, slope $+$ slack \\
\midrule
\multicolumn{4}{@{}l@{}}{\emph{per system} (ratio $q_{95}$): AS $0.911$ ($n{=}19$), SGI $0.910$ ($19$), LMR $0.880$ ($31$), SG $0.863$ ($49$)}\\
\bottomrule
\end{tabular}
\end{table}

\begin{table}[H]
\centering\small
\caption{\textbf{Falsification tests for Assumption~\ref{ass:main}.} (a) A constant fitted on the other three pipelines, evaluated on the held-out one --- the transferability the panel-level radius actually assumes. (b) The non-expansive claim is falsifiable and is not falsified; the instrument's writing stage is a positive control.}
\label{tab:lmtests}
\setlength{\tabcolsep}{6pt}
\begin{tabular}{@{}l rrr r@{}}
\toprule
\multicolumn{5}{@{}l@{}}{\emph{(a) leave-one-system-out}}\\
held out & $n$ test & $L_M$ (train) & $L_M$ (own) & held-out coverage \\
\midrule
AutoSurvey   & 19 & 0.881 & 0.911 & 0.789 \\
SurveyGen-I  & 19 & 0.882 & 0.910 & 0.842 \\
LLM$\times$MR & 31 & 0.895 & 0.880 & 0.968 \\
SurveyG      & 49 & 0.908 & 0.863 & 1.000 \\
\midrule
\multicolumn{5}{@{}l@{}}{\emph{(b) falsifiability} \quad locally expansive runs ($y>x$): $0/118$ \quad max ratio $0.967$ \quad instrument writing stage: $1.74$}\\
\bottomrule
\end{tabular}
\end{table}

\subsection{The bound at a controlled writing window}
\label{app:tc-w4}

Six published pipelines were each handed the same sealed writing-window entry on $20$ tasks, one per
subfield cluster: the papers, the outline, and the assignment of papers to sections, built from the review's
own sections and citations. The drivers call each pipeline's own stage code. This exit is the
controlled contrast $\Delta^\star$; each pipeline's own writing exit on the same task gives
$\Delta^{\mathrm{obs}}$. Table~\ref{tab:w4bound} reports the fit and the score-level check of
Theorem~\ref{thm:observable-attribution} over the $30$ admitted readouts scored at the writing window and $15$ pairs. The $5$ pairs
with LiRA, which does not retrieve, carry no retrieval readouts, which leaves $420$ cells. The shared
entry's paper set is essentially the reference bundle, so the self-retrieving pipelines still sit at
$x\in[0.74,1.00]$; LiRA, which is reference-fed natively, sits at $x=0$ with a nonzero exit
displacement. A pure ratio cannot describe that case, so the fit uses the five self-retrieving
pipelines; the slack $\xi$ of the theorem is the term that covers it. The bound is loose: the median bound is $0.955$ against
a median gap of $0.060$. It is informative on the writing axis, where no cell is vacuous, and vacuous
on every retrieval cell, because retrieval readouts read the given entry rather than the writing
exit. Non-expansiveness holds at matched length but fails at full length, because report length
dominates the exit distance, so the corollary's hypothesis depends on the declared length convention.

\begin{table}[H]
\centering\small
\caption{\textbf{The attribution bound at a controlled writing window} ($5$ self-retrieving
pipelines $\times$ $20$ tasks for the fit; $420$ pair-by-readout cells for the score-level check).}
\label{tab:w4bound}
\begin{tabular}{@{}lcc@{}}
\toprule
& matched length & full length \\
\midrule
fitted $L_M$ ($95$th pct.\ / sup) & $0.906$ / $0.940$ & $1.042$ / $1.188$ \\
runs with $y>x$ & $0/100$ & $62/100$ \\
leave-one-pipeline-out coverage (mean / min) & $0.93$ / $0.80$ & $0.94$ / $0.90$ \\
bound holds (all cells) & $420/420$ & $419/420$ \\
vacuous cells ($\eout\ge1$) & $43.6\%$ & $46.0\%$ \\
median $\eout$ / median $|\Delta^{\mathrm{obs}}-\Delta^\star|$ & $0.955$ / $0.060$ & $0.978$ / $0.066$ \\
\bottomrule
\end{tabular}
\end{table}

\subsection{Refusals computed from the bound}
\label{app:tc-refusals}

Table~\ref{tab:refusals} applies Eq.~\ref{eq:outside-bound} to the five pairs of LiRA with a
self-retrieving pipeline. The common entry $\mu$ is the reference bundle, which is LiRA's own entry,
so $\epsilon_{\mathrm{in},\mathrm{LiRA}}=0.002$ and $\epsilon_{\mathrm{in},P}$ is the mean entry
mismatch of pipeline $P$'s self-retrieved papers to the bundle. With $L_C=1$ and no channel, protocol
or slack term (the selected ratio fit has $\xi=0$), Eq.~\ref{eq:outside-bound} reduces to
$\eoutn{n}=L_{s_n}L_M(\epsilon_{\mathrm{in},P}+\epsilon_{\mathrm{in},\mathrm{LiRA}})$, with
$L_M=0.890$ and $L_{s_n}$ the score constant of readout $n$ at the writing window (median $0.833$).
For AutoSurvey, $0.833\times0.890\times(0.975+0.002)=0.725$. The radius is compared with the
pre-registered tolerance of each readout, the median gap between human peer reviews. Of the $47$
readouts in the pre-registered tolerance table, $15$ have an admissible tolerance ($0<\epsilon_{\mathrm{fair}}<0.5$), with median $0.208$.
Every pair is refused on all $15$; the smallest margin is $+0.077$, and a readout would be admitted
only if $L_M\le0.554$, below every observed ratio. The verdicts are unchanged under the
leave-one-pipeline-out constants ($0.881$--$0.908$).

\begin{table}[H]
\centering\small
\caption{\textbf{Refusals computed from Eq.~\ref{eq:outside-bound}} for LiRA against each
self-retrieving pipeline.}
\label{tab:refusals}
\begin{tabular}{@{}lcccc@{}}
\toprule
Pair & $\epsilon_{\mathrm{in},P}$ & median $\eout$ & median $\epsilon_{\mathrm{fair}}$ & readouts refused \\
\midrule
AutoSurvey -- LiRA & $0.975$ & $0.725$ & $0.208$ & $15/15$ \\
SurveyForge -- LiRA & $0.968$ & $0.720$ & $0.208$ & $15/15$ \\
SurveyG -- LiRA & $0.972$ & $0.723$ & $0.208$ & $15/15$ \\
SurveyGen-I -- LiRA & $0.942$ & $0.700$ & $0.208$ & $15/15$ \\
LLM$\times$MR -- LiRA & $0.979$ & $0.728$ & $0.208$ & $15/15$ \\
\bottomrule
\end{tabular}
\end{table}

\subsection{Composition across stages}
\label{app:tc-composition}

Published pipelines cannot be re-entered at an intermediate artifact, so the multi-stage bound is
checked on synthetic four-stage chains built from the human graphs, with known knobs for retrieval
recall, claim retention, and section order (Table~\ref{tab:synthetic}). The bound holds everywhere
and is informative through the organisation stage. At the writing stage it exceeds one on
$[0,1]$-valued distances.

\begin{table}[H]
\centering\small
\caption{\textbf{Theorem~\ref{thm:composition} on synthetic four-stage chains built from the human graphs.} Observed chain discrepancy against the composed bound (affine at $q_{95}$, and ratio form); fraction of chains on which the bound holds; tightness $=$ observed/bound.}
\label{tab:synthetic}
\setlength{\tabcolsep}{6pt}
\begin{tabular}{@{}l c cc c c c@{}}
\toprule
Stage $m$ & observed $\W$ & bound (affine) & bound (ratio) & holds & non-vacuous & tightness \\
\midrule
1 retrieval    & 0.605 & 0.744 & 0.744 & 95\,\%  & yes & 0.79 \\
2 synthesis    & 0.347 & 0.844 & 0.891 & 100\,\% & yes & 0.38 \\
3 organization & 0.261 & 1.312 & 0.981 & 100\,\% & yes & 0.26 \\
4 writing      & 0.685 & 1.465 & 3.378 & 100\,\% & \textbf{no} & 0.21 \\
\bottomrule
\end{tabular}
\end{table}

\subsection{Calibration of the certification rule}
\label{app:tc-calibration}

We apply the leaderboard's decision rule to a null in which no system differs, on the ten-system version of each table that preceded the addition of the trained SCRIBE and Elicit. Each pair's paired
differences are centred, and their signs are flipped per subfield cluster, with one sign vector per
draw shared by every pair so that the family is defined. Table~\ref{tab:calibration} reports the
family-wise false-certification rate over $2{,}000$ draws, the radius scale $c^\star$ at which that
rate equals the nominal $0.05$, and the certified relations that $c^\star$ would remove. It also
reports leave-one-subfield-out stability over the $23$ clusters. The rebuilt tables reproduce the
published leaderboard exactly before any perturbation.

\begin{table}[H]
\centering\small
\caption{\textbf{Calibration and stability of the leaderboard's decision rule.} Family-wise rates
are over $2{,}000$ sign-flip draws (nominal $0.05$). $c^\star$: radius scale giving the nominal
rate. Jaccard: overlap of the certified set when one of the $23$ subfields is left out (mean /
minimum).}
\label{tab:calibration}
\setlength{\tabcolsep}{5pt}
\begin{tabular}{@{}ll ccc l cc@{}}
\toprule
Table & View & pairs & certified & family-wise & $c^\star$ & Jaccard & flips \\
\midrule
fixed input & matched length & $45$ & $21$ & $0.062$ & $1.027$ (drops $1$) & $0.952$ / $0.870$ & $0$ \\
same pool & matched length & $45$ & $20$ & $0.035$ & $0.966$ (drops $0$) & $0.918$ / $0.727$ & $0$ \\
fixed input & full length & $45$ & $20$ & $0.051$ & $1.003$ (drops $0$) & $0.965$ / $0.905$ & $0$ \\
same pool & full length & $45$ & $16$ & $0.041$ & $0.980$ (drops $0$) & $0.877$ / $0.750$ & $0$ \\
\bottomrule
\end{tabular}
\end{table}

In that version, the one relation that calibration would remove is Claude Code over AutoSurvey in the fixed-input
table of the leaderboard, whose gap exceeds its radius by a factor of only $1.02$. No relation involving our system is affected.

\subsection{Comparability versus equivalence}
\label{app:tc-ctrl}

Fair comparability constrains only what surrounds a window; equivalence constrains what happens
inside it. $\Ctrl_M$ (Definition~\ref{def:equiv}) measures the second. For task-matched pairs we use
the contextwise radius, whose coupling is the task diagonal, so
$\Ctrl_M(A,B)=\operatorname{mean}_t[\lambda_{\mathrm{in}}\,d(\mathrm{entry}_A,\mathrm{entry}_B)
+(1-\lambda_{\mathrm{in}})\,d(\mathrm{exit}_A,\mathrm{exit}_B)]$, at $\lambda_{\mathrm{in}}=0.5$.
Where a direct entry distance is not stored, the entry term is bracketed by $|e_A-e_B|$ and
$e_A+e_B$, using each system's distance to the reference, and both ends are reported.

Table~\ref{tab:ctrl} shows the case that Remark~\ref{rem:fairness-vs-equivalence} separates.
The reference-fed AutoSurvey and LiRA are a fair pair. At the retrieval window they share the entry exactly
and $\Ctrl_M=0.070$, so their retrieval windows are nearly equivalent. At the writing window the same
pair has $\Ctrl_M\approx0.51$: comparable, but not equivalent, since substituting one writer for the
other moves the exit by half the scale of the readout.

\begin{table}[H]
\centering\small
\caption{\textbf{$\mathrm{Ctrl}_M$ for the reference-fed pairs} at $\lambda_{\mathrm{in}}=0.5$ (Eq.~\ref{eq:partial-radius}); mean over the tasks both ran, with the essential-supremum form. Pairs with the reference-fed SurveyGen-I are omitted, because they were not recomputed after its re-run. Entry lo/hi bracket the entry term where only each system's distance to the reference is stored. At retrieval the entry is the shared task specification, so the entry term is $0$ and $\mathrm{Ctrl}_M$ measures behaviour inside the window alone.}
\label{tab:ctrl}
\setlength{\tabcolsep}{6pt}
\begin{tabular}{@{}l r cc c cc@{}}
\toprule
& & \multicolumn{2}{c}{retrieval} & & \multicolumn{2}{c}{writing} \\
\cmidrule(lr){3-4}\cmidrule(lr){6-7}
Pair & $n$ & $\mathrm{Ctrl}_M$ & sup & & $\mathrm{Ctrl}_M$ (lo--hi) & sup \\
\midrule
AutoSurvey -- LiRA        & 50 & 0.070 & 0.161 & & 0.504--0.506 & 0.606 \\
LiRA -- SurveyG           & 49 & 0.092 & 0.293 & & 0.509--0.511 & 0.749 \\
AutoSurvey -- SurveyG & 49 & 0.127 & 0.322 & & 0.392--0.505 & 0.901 \\
AutoSurvey -- LLM$\times$MR & 50 & 0.424 & 0.472 & & 0.800--0.940 & 1.116 \\
LLM$\times$MR -- SurveyG & 49 & 0.429 & 0.485 & & 0.757--0.941 & 1.232 \\
LiRA -- LLM$\times$MR     & 50 & 0.430 & 0.478 & & 0.852--0.854 & 0.971 \\
\bottomrule
\end{tabular}
\end{table}

\subsection{The controllability distance}
\label{app:tc-fixed}

The operator of Theorem~\ref{thm:fixed-point} is iterated with exact optimal transport on empirical
branch kernels of the logged systems (Table~\ref{tab:thm43}). The check confirms the
implementation: the contraction modulus, the unique fixed point, and the pseudometric properties.

\begin{table}[H]
\centering\small
\caption{Theorem~\ref{thm:fixed-point} and its corollaries on the empirical kernels of the twelve logged configurations.}
\label{tab:thm43}
\resizebox{\textwidth}{!}{%
\begin{tabular}{@{}l l l c@{}}
\toprule
Property & Requirement & Measured & \\
\midrule
contraction & ratio $\le\gamma(1-\beta)=0.45$ & $0.449$, $0.425$, $0.428$ (random inputs); $0.450$ at convergence & \checkmark \\
existence / uniqueness & same fixed point from $d_0\equiv0$ and $d_0\equiv1$ & 18 / 18 iterations, sup gap $5.7\times10^{-7}$ & \checkmark \\
pseudometric & symmetry, triangle inequality & symmetry gap $0$; $0$ / $20{,}000$ triangle violations & \checkmark \\
zero self-distance & $d^\dagger(S,S)=0$ & $0$ for all 12 systems & \checkmark \\
\bottomrule
\end{tabular}}
\end{table}

\section{Scope}
\label{app:scope}

\paragraph{A certificate compares systems up to an exit.}
A certified relation between system windows compares complete systems as built, backbone included
(Section~\ref{sec:method}). A scaffold comparison asks which scaffold (the prompts, tools and
control flow around the model) is better for a fixed model. It holds the backbone fixed, so what it
certifies is an ordering of scaffolds around one generator, and a pair on different backbones is
comparable only if that difference is accounted for. Comparability is indexed by the declared interface,
observation channel and readout family.

\paragraph{The tasks come from one distribution.}
The $50$ tasks are biomedical review questions drawn from one corpus. The subfield clustering makes
the radius account for dependence within that distribution; transfer beyond it is a separate question.

\paragraph{Stage windows of published pipelines cannot be re-entered.}
Theorem~\ref{thm:composition} needs each stage to be run from a controlled input. The published
pipelines log every model call, so their intermediate artifacts can be recovered and scored, but
they cannot be re-entered at an intermediate artifact of our choosing; only their system windows can
be perturbed (Corollary~\ref{cor:nonexpansive}). Pipelines that accept an intermediate artifact as
input, and record its hash, would make every stage testable.

\end{document}